%% file: arxiv.tex
\documentclass{article}

\PassOptionsToPackage{numbers, sort&compress}{natbib}

\usepackage[preprint]{neurips_2026}

\usepackage[utf8]{inputenc} 
\usepackage[breaklinks,colorlinks,allcolors=ratioblue]{hyperref}
\usepackage{amsfonts}       
\usepackage{nicefrac}       
\usepackage{microtype}      
\usepackage{xcolor}         

\usepackage{url}
\usepackage{graphicx}
\usepackage{float}
\usepackage{placeins}
\usepackage{afterpage}
\usepackage{wrapfig}

\usepackage{array}
\usepackage{tabularx}
\usepackage{booktabs}
\usepackage[table]{xcolor}
\usepackage{multirow}
\usepackage{adjustbox}
\usepackage{makecell}
\usepackage{graphicx}

\usepackage{amsmath}
\usepackage{amssymb}
\usepackage{amsthm}
\usepackage{bm}

\usepackage{booktabs}      
\usepackage[table,svgnames]{xcolor}   
\usepackage{multirow}      
\usepackage{subcaption}
\usepackage{adjustbox}
\usepackage{tabularx}
\usepackage{makecell}

\usepackage{pifont}
\usepackage[ruled,vlined]{algorithm2e}
\usepackage[most]{tcolorbox}
\usepackage[inline]{enumitem}
\usepackage[T1]{fontenc}
\usepackage{fontawesome}
\usepackage{titletoc}

\newtheorem{definition}{Definition}[section]
\newtheorem{assumption}{Assumption}[section]
\newtheorem{remark}{Remark}[section]
\newtheorem{proposition}{Proposition}[section]
\newtheorem{corollary}{Corollary}[section]

\definecolor{down}{HTML}{B1281C}
\definecolor{up}{HTML}{489638}
\definecolor{darkblue}{HTML}{1F33B4}
\definecolor{ratioblue}{HTML}{266DA8}
\definecolor{ratiored}{HTML}{CD3334}

\newcommand{\ie}{\textit{i.e.}}

\newtcolorbox{templatebox}[1]{
    breakable,
    enhanced,
    colback=white,
    colframe=gray!80!black,
    colbacktitle=gray!80!black,
    coltitle=white,
    fonttitle=\bfseries,
    title=#1,
    arc=3mm,
    boxrule=1pt,
    drop fuzzy shadow={gray!50!white},
    left=5mm,
    right=5mm,
    top=3mm,
    bottom=3mm
}

\newtcblisting{rawtemplatebox}[2][]{
    breakable,
    enhanced,
    listing only,
    listing engine=listings,
    colback=white,
    colframe=gray!80!black,
    colbacktitle=gray!80!black,
    coltitle=white,
    fonttitle=\bfseries,
    title=#2,
    arc=3mm,
    boxrule=1pt,
    drop fuzzy shadow={gray!50!white},
    left=2mm,
    right=2mm,
    top=1.5mm,
    bottom=1.5mm,
    listing options={
        basicstyle=\ttfamily\scriptsize,
        breaklines=true,
        breakautoindent=false,
        breakindent=0pt,
        columns=fullflexible,
        keepspaces=true,
        showstringspaces=false,
        upquote=true
    },
    #1
}

\newtcolorbox{actguidecasebox}[1]{
    enhanced,
    breakable,
    colback=white,
    colframe=black,
    colbacktitle=gray!20,
    coltitle=black,
    fonttitle=\bfseries\scriptsize,
    title=#1,
    boxrule=0.5pt,
    arc=0mm,
    left=1mm,
    right=1mm,
    top=1mm,
    bottom=1mm,
    before skip=0.8\baselineskip,
    after skip=0.8\baselineskip,
    fontupper=\ttfamily\scriptsize\raggedright\sloppy,
    before upper={\setlength{\parskip}{2pt}\setlength{\parindent}{0pt}\obeylines}
}

\newcommand{\actguidecaseheader}[1]{%
    \noindent\colorbox{gray!20}{\parbox{\dimexpr\linewidth-2\fboxsep\relax}{\ttfamily\scriptsize\textbf{#1}}}\par
}

\title{Learning Agentic Policy from Action Guidance}

\author{
    Yuxiang Ji\textsuperscript{1,2}\thanks{Equal contribution. Work done during internship at AMAP, Alibaba Group. \quad \textsuperscript{\textdagger}Project lead.}\quad 
    \textbf{Zengbin Wang}\textsuperscript{2}\footnotemark[1]\quad
    \textbf{Yong Wang}\textsuperscript{2}\textsuperscript{\textdagger}\quad 
    \textbf{Shidong Yang}\textsuperscript{2}\quad
    \textbf{Ziyu Ma}\textsuperscript{2} \\
    \textbf{Guanhua Chen}\textsuperscript{3}\quad
    \textbf{Zonghua Sun}\textsuperscript{1}\quad
    \textbf{Liaoni Wu}\textsuperscript{1}\quad
    \textbf{Xiangxiang Chu}\textsuperscript{2} \\
    \textsuperscript{1}Xiamen University 
    \textsuperscript{2}AMAP, Alibaba Group
    \textsuperscript{3}Southern University of Science and Technology \\
}

\begin{document}

\maketitle
\setcounter{footnote}{0}

\begin{center}
\vspace{-30pt}
  \textbf{\faGithub\ GitHub:} \href{https://github.com/AMAP-ML/ActGuide-RL}{\textcolor{ratioblue}{https://github.com/AMAP-ML/ActGuide-RL}}
\end{center}

\input{sec/0_abstract}

\input{sec/1_introduction}

\input{sec/3_method_v2}

\input{sec/4_experiment}

\input{sec/2_related_work}

\input{sec/5_conclusion}

\bibliographystyle{plainnat}
\bibliography{ref}

\appendix
\input{sec/6_appendix}


\end{document}

%% file: sec/0_abstract.tex
\vspace{10pt}
\begin{abstract}

    Agentic reinforcement learning (RL) for Large Language Models (LLMs) critically depends on the exploration capability of the base policy, as training signals emerge only within its in-capability region.
    For tasks where the base policy cannot reach reward states, additional training or external guidance is needed to recover effective learning signals.
    Rather than relying on costly iterative supervised fine tuning (SFT), we exploit the abundant action data generated in everyday human interactions.
    We propose \textsc{ActGuide-RL}, which injects action data as plan-style reference guidance, enabling the agentic policy to overcome reachability barriers to reward states.
    Guided and unguided rollouts are then jointly optimized via mixed-policy training, internalizing the exploration gains back into the unguided policy.
    Motivated by a theoretical and empirical analysis of the benefit-risk trade-off, we adopt a minimal intervention principle that invokes guidance only as an adaptive fallback, matching task difficulty while minimizing off-policy risk.
    On search-agent benchmarks, \textsc{ActGuide-RL} substantially improves over zero RL (+10.7 pp on GAIA and +19 pp on XBench with Qwen3-4B), and performs on par with the SFT+RL pipeline without any cold start.
    This suggests a new paradigm for agentic RL that reduces the reliance on heavy SFT data by using scalable action guidance instead.
    
\end{abstract}

\begin{figure}[htbp]
    \centering
    \vspace{-20pt}
    \includegraphics[width=\linewidth]{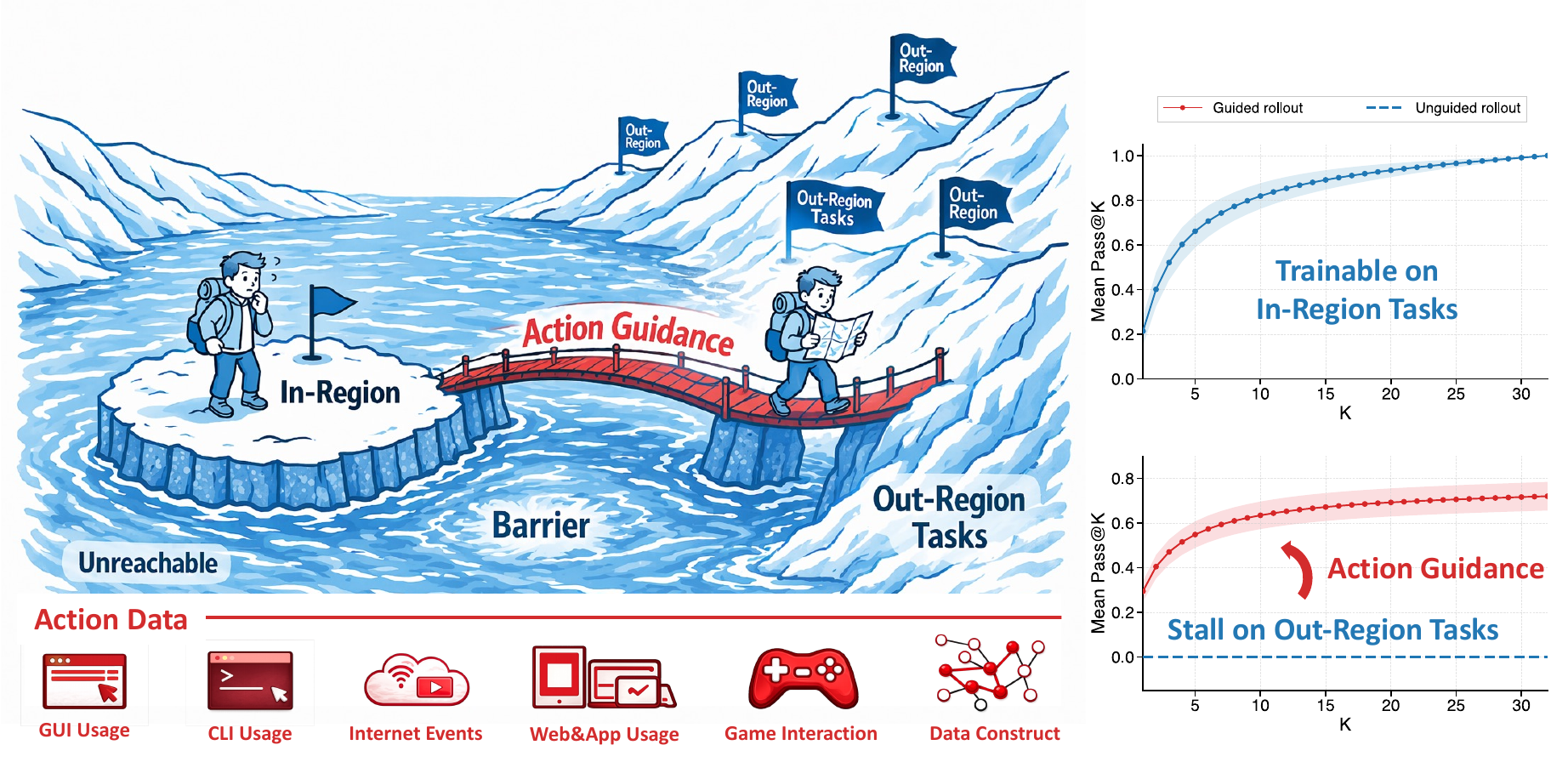}
    \caption{
        Agentic RL is typically confined to the \textit{in-capability region}\protect\footnotemark{} of the base policy, and stalls on \textit{out-region} tasks beyond this exploration frontier.
        \textsc{ActGuide-RL} leverages diverse and scalable \textbf{action data} as plan-style reference to guide effective state visitation in \textit{out-region} tasks.
    }
    \label{fig:teaser}
\end{figure}

\footnotetext{\textit{In-region} is where reward signals are reachable during rollout (pass@K > 0).}

%% file: sec/1_introduction.tex
\section{Introduction}
\vspace{-5pt}

The role of Large Language Models (LLMs) has shifted from simple chatbots to agents capable of independently solving complex tasks~\citep{yao2022react,yao2024tau,luo2025large,wei2026agentic,ma2026skillclaw}.
With targeted agentic training, recent frontier models can autonomously plan and accomplish a wide range of complex tasks~\citep{openai_gpt_5_4,anthropic_opus_4_6,team2026kimi}.
This ability has been demonstrated in general tool-use~\citep{barres2025tau,dong2025tool,ji2026thinking}, GUI~\citep{OSWorld,qin2025ui,zheng2026code2world}, and CLI~\citep{jimenez2023swe} settings, including in-the-wild real-world scenarios~\citep{wang2026openclaw,wildclawbench}.
A key factor behind such targeted training is agentic reinforcement learning (RL), in which LLM-based policies are optimized through repeated interaction with specific or diverse environments toward verifiable or heuristic rewards~\citep{zhang2025landscape,wang_ragen_2025,ji2026tree}.

Unlike static supervised training, online RL is highly sensitive to task difficulty because the training signal comes only from exploration by the model itself.
As Figure~\ref{fig:teaser}, we refer to tasks within the reachable capability of the base policy as \textit{in-region}, and those beyond this boundary as \textit{out-region}.
When reward states fall into the out-region, group-based advantage estimates collapse to zero gradient, causing training to stall.
As a result, a common view is that current RL-based methods are fundamentally limited by the capabilities of the base model~\citep{yue_does_2025,wu2026learn,dai2026harder,huang2025boosting}.
To address the cold-start problem of RL on difficult or unseen tasks, a typical practice is to perform corresponding Supervised Fine-Tuning (SFT) followed by dynamic difficulty adjustment or curriculum learning.
However, such pipelines shift the burden to warm-start data and careful curriculum design.
This dependence makes agentic RL complex and difficult to scale to new environments.

Stepping back to the original motivation for developing agentic capabilities, the goal is to move beyond reasoning and enable models to act, interact, and make decisions in a human-like manner to accomplish long-horizon tasks.
From this perspective, a direct and currently underutilized training source is the abundant \textit{action data} generated in open-world settings or during task construction.
As shown in Figure~\ref{fig:teaser}, examples include step-by-step GUI/CLI interactions with computers or phones, API-mediated task execution, and long-horizon gameplay.
In addition, some agentic RL tasks are constructed through a reverse process~\citep{li_websailor_2025,dong2026agent,jimenez2023swe}, where a valid action trajectory is first constructed and then used to instantiate the task, making the correct actions naturally available.
These action data are inherently diverse and large in scale, yet their direct use for model training is often limited by the absence of explicit reasoning traces.
Existing approaches either augment such data with synthesized chain-of-thought~\citep{erdogan2025plan,yang2026gui} or directly leverage it through behavior imitation~\citep{deng2023mind2web,caccia2024fine}. 
However, synthesized reasoning can suffer from post-hoc rationalization~\citep{turpin2023language}, while behavior imitation tends to fit surface action patterns rather than induce the reasoning abilities of agentic policy.

In this work, we investigate how to leverage action data to enhance agentic RL.
Through empirical analysis, we first characterize the capability barrier of agentic policies, where reward states fall outside the current reachable region and training signals become unavailable.
To address this issue, we propose \textsc{ActGuide-RL}, which injects action data as plan-style reference guidance to help the policy cross such barriers and perform effective out-region state visitation.
We further analyze the benefit-risk trade-off introduced by guidance, where stronger guidance improves exploration but also increases off-policy distribution shift.
Based on this, we draw two main conclusions from our experiments:
\textbf{C1}: Action guidance works best when it serves as a zero-reward fallback and is minimized adaptively, following a principle of minimal intervention.
\textbf{C2}: Under such minimal intervention, guided rollouts can be directly internalized into the unguided model through a mixed-policy optimization paradigm.

We evaluate \textsc{ActGuide-RL} on search-agent benchmarks across different base models, task difficulty levels, and both in-domain and out-of-domain settings.
Compared with zero RL, \textsc{ActGuide-RL} consistently improves all tested base models, with especially large gains on harder benchmarks where unguided RL struggles to obtain effective training signals.
Specifically, based on Qwen3-4B-Instruct, \textsc{ActGuide-RL} improves zero RL by +10.68 pp on GAIA, +27.79 pp on WebWalkerQA, +19.00 pp on XBench, and +5.15 pp on BC-ZH.
Notably, it also performs on par with the SFT+RL pipeline even without any cold-start initialization.
This substantially alleviates the dependence on SFT and offers a new perspective for agentic post-training.

%% file: sec/3_method_v2.tex
\section{Method}
\vspace{-5pt}
    

\subsection{Preliminaries: Agentic RL}
\vspace{-5pt}
    We follow existing works to formulate Agentic RL as a Partially Observable Markov Decision Process (POMDP), where a language model acts as a policy $\pi_\theta$.
    Given a task instance $x \sim \mathcal{D}$, the policy receives the interaction history as its state $s_t$ at each step $t$, and predicts the next step $\alpha_t \sim \pi_\theta(\cdot \mid s_t)$.
    A full rollout yields a trajectory $\tau$ with a binary outcome reward $Y(\tau) \in \{0, 1\}$ indicating whether the task is successfully solved.
    The overall training objective is to maximize the expected reward:
    \begin{equation}
        \max_\theta \;\; \mathcal{J}(\theta)
        \;:=\; \mathbb{E}_{x \sim \mathcal{D}}\, \mathbb{E}_{\tau \sim \pi_\theta(\cdot \mid x)}\!\left[\, Y(\tau)\, \right].
        \label{eq:objective}
    \end{equation}
    Since $Y(\tau)$ is binary, this naturally amounts to maximizing the expected success rate over a task distribution that may contain tasks of \emph{varying difficulty}.

    \begin{figure}[t]
        \centering
        \includegraphics[width=\linewidth]{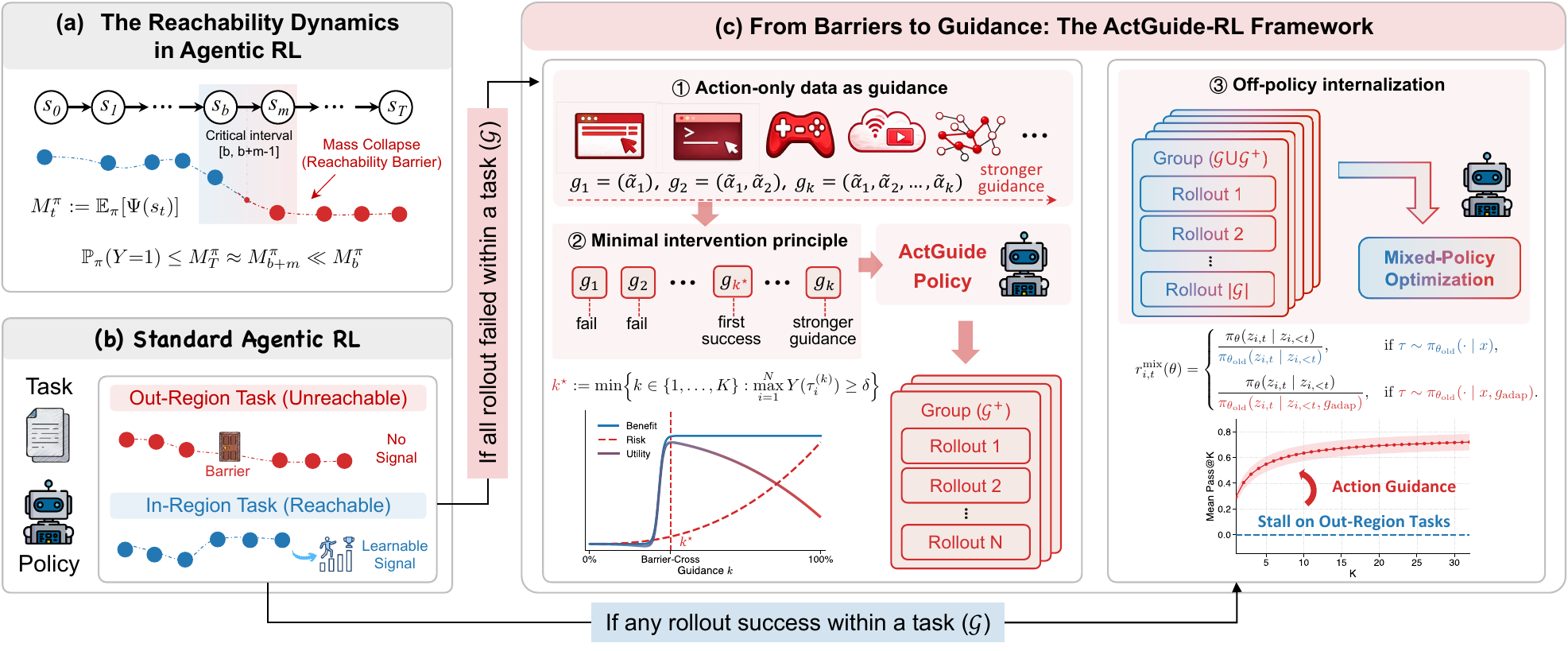} 
        \caption{
            \textbf{Overview of \textsc{ActGuide-RL} framework.}
            Conventional agentic RL can only obtain training signals within the base model in-capability region.
            \textsc{ActGuide-RL} follows the principle of minimal intervention, dynamically introducing action data to guide the model toward out-region exploration.
            Such mixed rollouts are trained through mixed-policy optimization.
        }
        \vspace{-10pt}
        \label{fig:framework}
    \end{figure}
    
\subsection{The Reachability Barrier in Agentic RL}
\label{sec:reachability_dynamics}
\vspace{-5pt}
    %
    %
    To optimize the above objective, recent RL algorithms~\citep{chu2026gpg,shao_deepseekmath_2024,yu_dapo_2025} often sample a group of $N$ rollout trajectories $\{\tau_i\}_{i=1}^N$ per task and compute advantages from the contrast between successful and failed ones.
    This mechanism works well when reward states lie within the in-capability region.
    However, when reward states fall into the out-region and become unreachable, no learning signal is obtained.
    We formalize this phenomenon through the concept of \emph{\textbf{reachability dynamics}}.

    \begin{definition}[\textbf{Reachability Dynamics}]\label{def:reachability_dynamics}
        Let $\Psi(s) := \sup_{\pi} \mathbb{P}_{\pi}(Y {=} 1 \mid s)$ denote the least upper bound on the success probability achievable by any continuation policy from state $s$.
        We define the effective state-visiting mass
        \begin{equation}
            M_t^\pi := \mathbb{E}_{\pi}[\Psi(s_t)],
        \end{equation}
        which measures the average remaining success potential along rollouts induced by policy $\pi$.
        The ratio $\bar{\kappa}_t^\pi := M_{t+1}^\pi / M_t^\pi$ quantifies the one-step reachability retention.
        By telescoping, the mass over any interval $[u, v)$ satisfies the multiplicative recursion 
        \begin{equation}
            M_v^\pi = M_u^\pi \prod_{t=u}^{v-1} \bar{\kappa}_t^\pi.
        \end{equation}
    \end{definition}

    \begin{remark}[\textbf{Mass Collapse as Reachability Barrier}]\label{rmk:barrier}
        Since $\Psi(s)$ upper-bounds the success probability achievable from $s$ under any policy, the terminal success probability satisfies
        \begin{equation}
            \mathbb{P}_\pi(Y{=}1) = \mathbb{E}_\pi\!\left[\mathbb{P}_\pi(Y{=}1 \mid s_T)\right] \leq \mathbb{E}_\pi\!\left[\Psi(s_T)\right] = M_T^\pi.
            \label{eq:mass_upperbound}
        \end{equation}
        Suppose a short critical interval $[b, b{+}m{-}1]$ exhibits low cumulative retention, \ie, $\prod_{t=b}^{b+m-1} \bar{\kappa}_t^\pi \ll 1$, so that $M_{b+m}^\pi \ll M_b^\pi$.
        Once such a sharp drop in reachability mass occurs, the remaining rollout tends to stay in low-reachability regions, so the terminal mass remains close to the post-collapse level, \ie, $M_T^\pi \approx M_{b+m}^\pi$.
        We call such a regime a \textbf{reachability barrier}.
    \end{remark}

    A reachability barrier makes rollouts beyond step $b{+}m$ receive $Y(\tau){=}0$, collapsing the group-based advantage to zero gradient. 
    \emph{This confines the model to in-region training and prevents learning on out-region tasks.}
    Unlike insufficient sampling, this failure is structural, so increasing $N$ cannot help. 
    The policy itself must first be steered across the critical interval, which motivates our method below.

    

\subsection{From Barriers to Guidance: The \textsc{ActGuide-RL} Framework}
\vspace{-5pt}
    %
    To address the fundamental barrier in agentic RL, we propose \textsc{ActGuide-RL} to use action as guidance, illustrated in Figure~\ref{fig:framework}. 
    \textsc{ActGuide-RL} is driven by three core questions along with two empirical findings: whether action data can repair reachability barriers (\S\ref{sec:barrier_repair}, Finding~1), how much guidance to inject (\S\ref{sec:minimal_intervention}, Finding~2), 
    and how to optimize from guided samples (\S\ref{sec:offpolicy}).

\subsubsection{How to Guide: Action Data Repairs Barriers} \label{sec:barrier_repair}
\vspace{-5pt}
    To explore whether action-only data can repair reachability barriers, we treat the action trajectory as a reference plan $g=(\tilde{\alpha}_1,\dots,\tilde{\alpha}_L)$ and condition the policy as $\pi_\theta(\cdot \mid s, g)$. 
    We then compare the guided and unguided behavior along the guided rollout. 
    Specifically, we measure:
        \begin{equation}
            \resizebox{0.9\linewidth}{!}{$
                \underbrace{
                |\Delta\mathrm{Logit}|
                =
                \left|
                \mathrm{logit}_{\pi_\theta}(\cdot \mid s_t, g)
                -
                \mathrm{logit}_{\pi_\theta}(\cdot \mid s_t)
                \right|
                \vphantom{\mathbb{P}_{\tau_{1:K}\sim\pi_\theta(\cdot\mid s_t)}
                \!\left[\max_{i\le K}Y(\tau_i)=1\right]}
                }_{\text{token-level guidance influence}},
                \; \;
                \underbrace{
                \mathrm{Pass@K}
                =
                \mathbb{P}_{\tau_{1:K}\sim\pi_\theta(\cdot\mid s_t)}
                \!\left[\max_{i\le K}Y(\tau_i)=1\right]
                }_{\text{prefix-level reachability}}
            $}
        \end{equation}
    where $|\Delta\mathrm{Logit}|$ computes the token logits difference between the guided policy $\pi_\theta(\cdot \mid s,g)$ and the unguided policy $\pi_\theta(\cdot \mid s)$, capturing how much guidance changes the policy locally.
    Prefix-level $\mathrm{Pass@K}$ instead samples unguided continuations from the current guided state $s_t$ and measures whether they can recover reward, reflecting the remaining reachability after that state.
    
    \begin{figure}[htbp]
        \vspace{0pt}
        \centering
        \includegraphics[width=\linewidth]{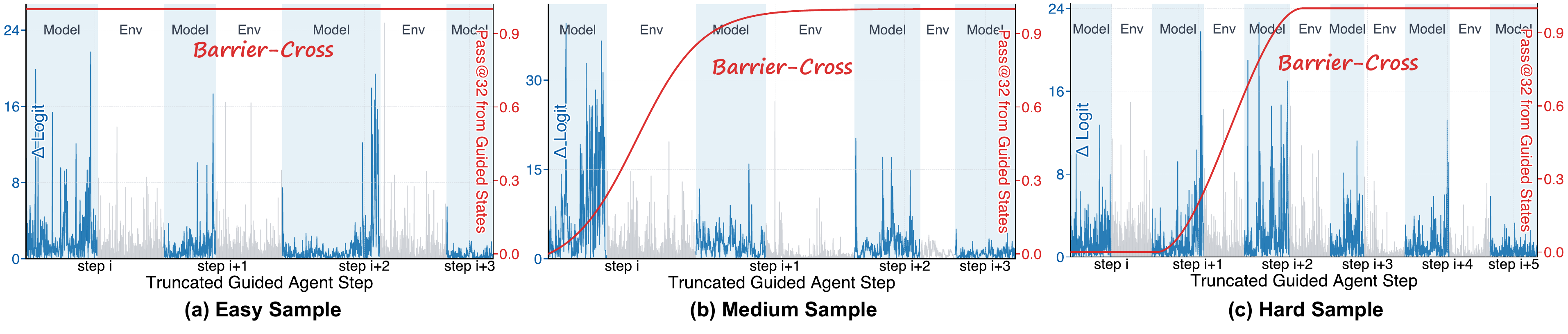}
        \vspace{-18pt}
        \caption{
            Action guidance repairs reachability barriers along guided rollouts.
            \textcolor[HTML]{236DA9}{Blue bars} show $|\Delta\mathrm{Logit}|$, and \textcolor[HTML]{D53535}{red curves} show prefix-level Pass@K ($K{=}32$).
            Barriers emerge where unguided Pass@K collapses and the guidance-induced logit shift spikes.
        }
        \label{fig:barrier_sample}
        \vspace{-5pt}
    \end{figure}

    \noindent\textbf{Finding 1: Action guidance repairs reachability barriers.}
    As shown in Figure~\ref{fig:barrier_sample}, easy tasks\footnote{Easy samples: the model can discover reward from early guided states.} already show non-zero Pass@K from early guided states, while harder tasks\footnote{Harder samples: rewarding states become reachable only at much later guided states.} keep zero unguided Pass@K until the guided trajectory crosses the barrier.
    Within these barrier intervals, $|\Delta\mathrm{Logit}|$ spikes sharply, showing that action trajectories diverge from the current policy exactly where it fails.
    After the barrier is crossed, unguided Pass@K recovers to non-trivial levels, showing that action guidance brings the policy to reachable reward states rather than simply replacing its decisions.
    %
    
    Motivated by Finding~1, we formally leverage action data ($g$) as the effective guidance signal and simply append it to the task prompt as a list of future reference actions (Appendix~\ref{box:ActGuide prompt}). 
    This provides a non-intrusive reference plan, rather than forcing the model to generate the actions as a fixed prefix.
    Moreover, recognizing that different barriers may require varying amounts of guidance to cross, we organize guidance into an ordered family
    \begin{equation}
        g_0 = \varnothing \prec g_1 \prec \cdots \prec g_K,
    \end{equation}
    where $g_k = (\tilde{\alpha}_1, \dots, \tilde{\alpha}_k)$ provides the first $k$ reference actions. This gives guidance a monotone strength parameter, which later allows us to search for the minimal sufficient intervention.
    For a barrier interval $[b, b+m-1]$ of the base policy $\pi_\theta(\alpha_t \mid s_t)$, 
    we measure the \emph{barrier-repair benefit} of guidance level $g_k$ by the increase of effective state-visiting mass after the barrier:
    \begin{equation}
        B_k := \log \frac{M_{b+m}^{\pi_\theta(\cdot \mid s, g_k)}}{M_{b+m}^{\pi_\theta(\cdot \mid s)}},
        \label{eq:barrier_repair}
    \end{equation}
    where a larger $B_k$ implies that the guidance better preserves reachable success potential.
    
    %
    %

\subsubsection{How Much to Guide: Minimal Intervention Principle} \label{sec:minimal_intervention}
\vspace{-5pt} 
    While stronger guidance raises the barrier-repair benefit $B_k$ (Eq.~\ref{eq:barrier_repair}), it also induces a larger distribution shift from the base policy, increasing the risk of off-policy optimization error~\citep{van2018deep,zheng2025prosperity}.
    Let $\tau=(z_1,\ldots,z_{|\tau|})$ be the generated token sequence.
    To quantify the distribution shift under guidance level $g_k$, we measure the cumulative token-level log-ratio shift of a rollout $\tau$:
    \begin{equation}
        \mathcal{L}_k(\tau)
        :=
        \sum_{j=1}^{|\tau|}
        \log
        \frac{\pi_\theta(z_j\mid z_{<j})}
             {\pi_\theta(z_j\mid z_{<j}, g_k)}.
    \end{equation}
    The corresponding \emph{off-policy risk} is the variance of this shift:
    \begin{equation}
        R_k := \mathrm{Var}_{\tau \sim \pi_\theta(\cdot \mid s, g_k)}\!\left(\mathcal{L}_k(\tau)\right).
    \end{equation}

    \begin{wrapfigure}{r}{0.42\linewidth}
        \vspace{-2pt}
        \centering
        \includegraphics[width=\linewidth]{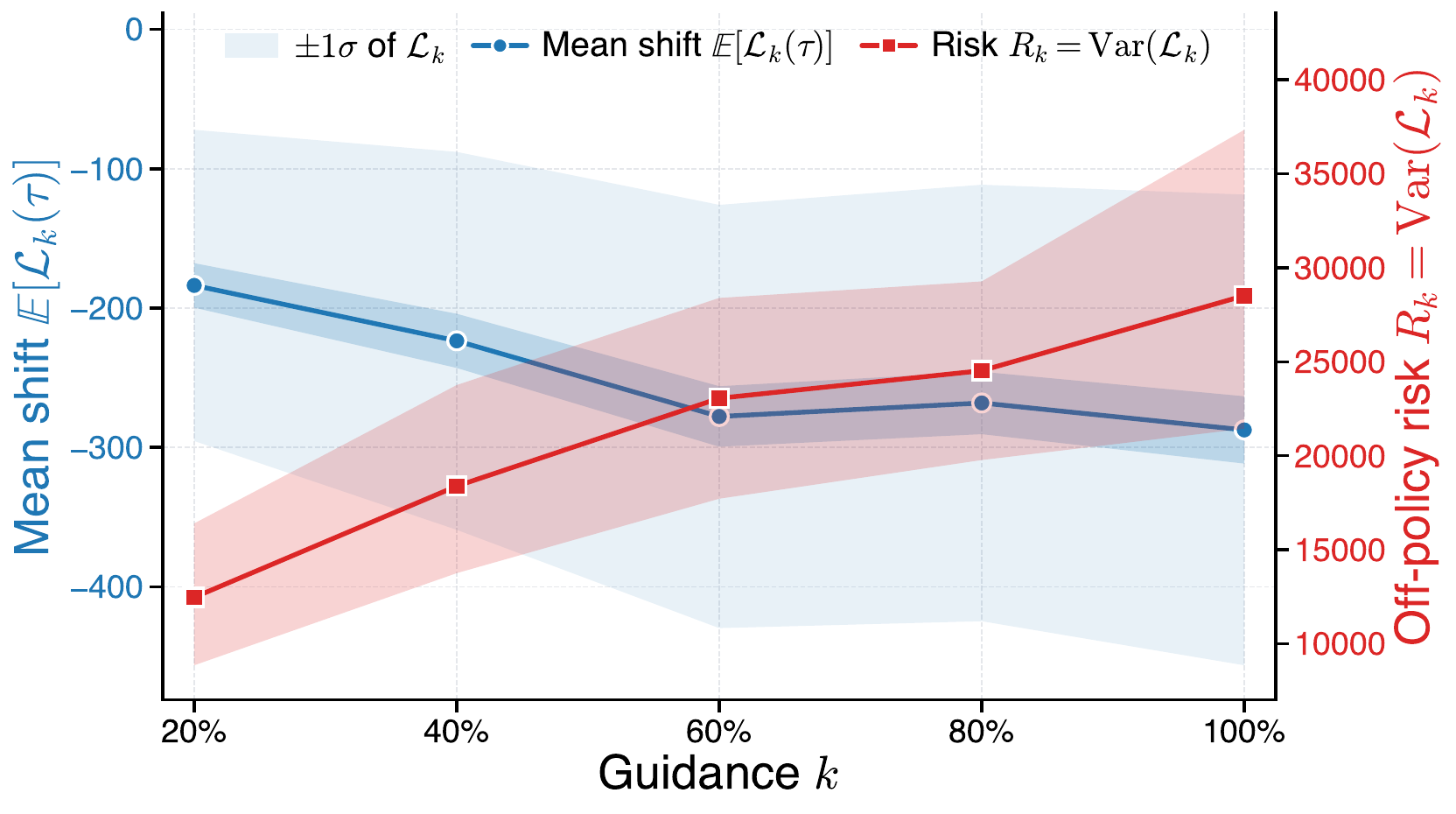}
        \caption{
            %
            Guidance-induced distribution shift under different guidance proportions. The blue curve shows the mean log-ratio shift, while the red curve shows its variance, \ie, the off-policy risk $R_k$.
        }
        \label{fig:guidance_tradeoff}
        \vspace{-10pt}
    \end{wrapfigure}

    \vspace{2pt}
    \noindent\textbf{Finding 2: Over-guidance inflates off-policy risk.}
    As shown in Figure~\ref{fig:guidance_tradeoff}, the mean log-ratio shift (blue) and its variance (red) describe the guidance-induced distribution shift from complementary perspectives.
    As the guidance level $k$ increases, the off-policy risk $R_k$ keeps rising, indicating that stronger guidance makes guided rollouts increasingly unstable for off-policy optimization.

    Motivated by Finding~2, we adopt a \emph{minimal intervention principle}: for each task, use the least guidance level that recovers reward. 
    This principle can be viewed as approximately maximizing a guidance utility $J_k = B_k - \lambda R_k$, where the barrier-repair benefit $B_k$ exhibits a sharp increase after reward recovery while the off-policy risk $R_k$ grows with the guidance level.
    In practice, we first collect an unguided rollout group per task, invoking guidance only as a fallback when the entire group fails. 
    Under a mild monotonicity assumption (stronger levels do not decrease recovery probability), we can efficiently identify the smallest sufficient level $k^\star$ via binary search:
    \begin{equation}
        k^\star := \min \Bigl\{ k \in \{1, \dots, K\} : \max_{i=1}^N Y(\tau_i^{(k)}) \ge \delta \Bigr\},
        \label{eq:binary_search}
    \end{equation}
     where $\{\tau_i^{(k)}\}_{i=1}^N$ are $N$ rollouts under guidance $g_k$ and $\delta > 0$ is the success threshold. 
    We denote the resulting adaptive guidance as $g_\text{adap} := g_{k^\star}$, which keeps guided rollouts close to the unguided distribution and enables the off-policy optimization studied next.
    Note that under binary rewards, $B_k$ exhibits threshold behavior (near zero until the barrier is crossed, then jumping sharply), while $R_k$ grows monotonically. The guidance utility $J_k$ therefore peaks near the minimal successful level, making the binary search in Eq.~\ref{eq:binary_search} a practical proxy for approximately maximizing $J_k$.


\subsubsection{How to Learn: Off-Policy Internalization} \label{sec:offpolicy}
\vspace{-5pt}
    Action guidance is available only at training time. At inference, the agent must act under the unguided policy $\pi_\theta(\cdot \mid x)$, so any learning signal extracted from guided rollouts has to be internalized.
    Since the guided policy $\pi_\theta(\cdot \mid x, g)$ shares parameters with the unguided one, we treat guided samples as off-policy data w.r.t.\ $\pi_\theta(\cdot \mid x)$ and optimize the mixed objective
    \begin{equation}
        \small
        \begin{aligned}
        \mathcal{J}_{\mathrm{mix}}(\theta)
        &=
        \mathbb{E}_{(x,\bar g)\sim \mathcal D,\ \mathcal G\sim q_{\theta_{\rm old}}^{\rm mix}(\cdot \mid x,\bar g)}
        \Biggl[
        \frac{1}{\sum_i T_i}
        \sum_{i=1}^{|\mathcal{G}|}\sum_{t=1}^{T_i}
        \min\Bigl(
        r_{i,t}^{\rm mix}(\theta)\,\hat{A}(\tau_i), \\
        &\qquad\qquad
        \mathrm{clip}\!\left(r_{i,t}^{\rm mix}(\theta), 1{-}\epsilon, 1{+}\epsilon\right)\hat{A}(\tau_i)
        \Bigr)
        - \beta \frac{1}{|\mathcal G|}\sum_{i=1}^{|\mathcal G|}
        \mathbb{D}_{\mathrm{KL}}\!\left(
        \pi_\theta(\tau_i \mid x)\,\|\,\pi_{\mathrm{ref}}(\tau_i \mid x)
        \right)
        \Biggr],
        \end{aligned}
    \end{equation}
    where $q_{\theta_{\rm old}}^{\rm mix}$ denotes the mixed rollout collection process in Algorithm~\ref{alg:adaptive_guidance}, $\hat{A}(\tau_i)$ is the group-based advantage, and the token-level importance ratio adapts to the rollout source:
    \begin{equation}
        r^{\rm mix}_{i,t}(\theta)
        =
        \begin{cases}
        \displaystyle
        \frac{\pi_\theta(z_{i,t}\mid z_{i,<t})}
             {\textcolor{ratioblue}{\pi_{\theta_{\rm old}}(z_{i,t}\mid z_{i,<t})}},
        & \text{if } \textcolor{ratioblue}{\tau_i \sim \pi_{\theta_{\rm old}}(\cdot \mid x)}, \\[1.2em]
        \displaystyle
        \frac{\pi_\theta(z_{i,t}\mid z_{i,<t})}
             {\textcolor{ratiored}{\pi_{\theta_{\rm old}}(z_{i,t} \mid z_{i,<t}, g_{\text{adap}})}},
        & \text{if } \textcolor{ratiored}{\tau_i \sim \pi_{\theta_{\rm old}}(\cdot \mid x,g_{\text{adap}})}.
        \end{cases}
    \end{equation}
    For unguided rollouts this is the standard importance ratio; for guided rollouts the denominator uses the guided distribution, transferring credit back to the unguided target $\pi_\theta(\cdot \mid x)$.
    Unlike prior off-policy RL methods that include ratio shaping~\citep{yan_learning_2025,nath2025adaptive}, we keep the optimization objective unchanged because minimal intervention limits the shift between guided rollouts and the base policy.
    
    %

    \begin{algorithm}[t]
    \caption{Adaptive Minimal-Intervention Training with Action Guidance}
    \label{alg:adaptive_guidance}
    \KwIn{policy $\pi_\theta$, dataset $\mathcal D=\{(x,\bar g)\}$, minibatch size $M$, group size $N$, threshold $\delta$, search budget $B$, steps $S$}
    \For{$s=1$ \KwTo $S$}{
        Sample $\mathcal B=\{(x_b,\bar g_b)\}_{b=1}^M \sim \mathcal D$; $\mathcal G \leftarrow \emptyset$\;
        \ForEach{$(x_b,\bar g_b)\in\mathcal B$}{
            Define $g_{b,k}=(\tilde\alpha_{b,1},\ldots,\tilde\alpha_{b,k})$\;
            $\mathcal G_b \leftarrow \{(\tau_{b,i},r_{b,i})\}_{i=1}^N,\ \tau_{b,i}\sim\pi_{\theta_{\rm old}}(\cdot\mid x_b),\ r_{b,i}=Y(\tau_{b,i})$\;
            \If{$\max_i r_{b,i} < \delta$}{
                $k_b^\star \leftarrow \min\Bigl\{k:\max_j r_{b,j}^{(k)} \ge \delta\Bigr\}$ via binary search under budget $B$\;
                \If{$k_b^\star$ exists}{
                    $\mathcal G_b^+ \leftarrow \{(\tau_{b,i}^+,r_{b,i}^+)\}_{i=1}^N,\ \tau_{b,i}^+\sim\pi_{\theta_{\rm old}}(\cdot\mid x_b,g_{b,k_b^\star}),\ r_{b,i}^+=Y(\tau_{b,i}^+)$\;
                    $\mathcal G_b \leftarrow \mathcal G_b \cup \mathcal G_b^+$\;
                }
            }
            $\mathcal G \leftarrow \mathcal G \cup \mathcal G_b$\;
        }
        Compute advantages on $\mathcal G$;
        Update $\pi_\theta(\cdot\mid x)$ by $\mathcal J_{\mathrm{mix}}$\;
    }
    \end{algorithm}

%% file: sec/4_experiment.tex
\section{Experiment}
\vspace{-5pt}

\subsection{Experimental Setup}
\vspace{-5pt}

    \noindent\textbf{Benchmarks.} 
    To evaluate the effectiveness of our proposed \textsc{ActGuide-RL} in LLM agentic RL, we conduct experiments in the search-agent setting, which is stateless and facilitates the collection of action data.
    Our evaluation covers two categories of benchmarks. 
    The first category is in-domain search-agent benchmarks, including four representative datasets, \textit{GAIA}~\citep{mialon_gaia_2023}, \textit{WebWalkerQA}~\citep{wu_webwalker_2025}, \textit{XBench}~\citep{chen_xbench_2025}, and \textit{BrowseComp-ZH (BC-ZH)}~\citep{zhou2025browsecompzh}, which span diverse difficulty levels, multiple languages, and real-world multi-step reasoning scenarios.
    The second category is out-of-domain benchmarks, including \textit{GPQA}~\citep{rein2023gpqa}, \textit{TruthfulQA}~\citep{lin2022truthfulqa}, and \textit{IFEval}~\citep{zhou2023instruction}, which are used to evaluate the out-of-domain generalization ability of models beyond the search-agent setting.
    The detailed RL and SFT training data source are provided in Appendix~\ref{sec:datasets}.
    
    \noindent\textbf{Baselines.}
    Under the same evaluation protocol, we compare \textsc{ActGuide-RL} against several baselines, including foundation models~\citep{minimax_m2_1,liu2024deepseek,singh2025openai}, specified search-agent-trained models~\citep{li_websailor_2025,dong_agentic_2025,li_webthinker_2025}, and vanilla RL trained from the same backbones without action guidance.
    For the RL baseline, we adopt the standard GRPO objective with token-level policy optimization, using the same training data but without action guidance.

    \noindent\textbf{Implementation Details.} 
    Following Tongyi-DeepResearch~\citep{team2025tongyi}, we equip the agent with two tools, \textit{web-search} and \textit{web-visit}, whose schemas are included in the system prompt. 
    Given the limited interaction budget and context length in our setup, we use raw tool outputs directly without a separate summary model. 
    For both training reward and test-time evaluation, we adopt the few-shot, reference-based binary LLM-judge template from Tongyi-DeepResearch.    
    Full implementation details are provided in Appendix~\ref{sec:experiment details}.

    \begin{table}[t]
        \centering
        \caption{
            Main results of \textsc{ActGuide-RL} on search-agent benchmarks comparing foundation models, search-agent trained models, and RL baseline.
            The best results are indicated in \textbf{bold}.
        }
        \label{tab:main-comparison}
        \resizebox{\textwidth}{!}{
        \begin{tabular}{lcccccccccc}
            \toprule
            \multirow{2}{*}[-1em]{\textbf{Method}} & \multicolumn{4}{c}{\textbf{General AI Assistant}} & \multicolumn{4}{c}{\textbf{WebWalkerQA}} & \textbf{XBench} & \textbf{BC-ZH} \\
            \cmidrule(lr){2-5} \cmidrule(lr){6-9} \cmidrule(lr){10-10} \cmidrule(lr){11-11}
            & Lv.1 & Lv.2 & Lv.3 & Avg. & Easy & Med. & Hard & Avg. & Avg. & Avg. \\
            \midrule
            \multicolumn{11}{c}{\textbf{\textit{Foundation Model}}} \\
            \midrule
            MiniMax-M2.1 & - & - & - & 64.3 & - & - & - & - & 68.0 & 66.6 \\
            DeepSeek-V3.2 & - & - & - & 75.1 & - & - & - & - & 78.0 & 65.0 \\
            GPT-5 High & - & - & - & 76.4 & - & - & - & - & 77.0 & 65.0 \\
            \midrule
            \multicolumn{11}{c}{\textbf{\textit{Search-Agent-Trained Models}}} \\
            \midrule
            WebSailor-7B & - & - & - & 37.9 & - & - & - & - & 34.0 & 14.2 \\
            ARPO-8B & 53.9 & 32.7 & 16.7 & 38.8 & 26.7 & 33.3 & 29.6 & 30.5 & 25.0 & - \\
            WebThinker-32B-RL & 56.4 & 50.0 & 16.7 & 48.5 & 58.8 & 44.6 & 40.4 & 46.5 & 24.0 & 7.3 \\
            \midrule
            \multicolumn{11}{c}{\textbf{\textit{Baseline and} \textsc{ActGuide-RL}}} \\
            \midrule
            Qwen2.5-3B-Instruct & 15.38 & 7.69 & 0.00 & 9.71 & 5.00 & 7.14 & 4.58 & 5.73 & 8.00 & 2.08 \\
            \; $+$ RL & 15.38 & 7.69 & 16.66 & 11.65 & 15.00 & 15.00 & 15.83 & 15.29 & 10.00 & 2.42 \\
            \rowcolor[HTML]{F5F5F5}
            \; $+$ \textbf{\textsc{ActGuide-RL}} & \textbf{28.21} & \textbf{11.54} & \textbf{16.66} & \textbf{18.45} & \textbf{18.75} & \textbf{16.07} & \textbf{22.08} & \textbf{18.82} & \textbf{16.00} & \textbf{4.50} \\
            \; $\Delta$\footnotesize{Delta} & \footnotesize{+12.83} & \footnotesize{+3.85} & \footnotesize{+0.00} & \footnotesize{+6.80} & \footnotesize{+3.75} & \footnotesize{+1.07} & \footnotesize{+6.25} & \footnotesize{+3.53} & \footnotesize{+6.00} & \footnotesize{+2.08} \\
            \midrule
            Qwen2.5-7B-Instruct & 35.89 & 15.38 & 8.33 & 22.32 & 18.75 & 19.28 & 16.25 & 18.09 & 19.00 & 4.50 \\
            \; $+$ RL & 20.51 & 7.69 & 0.00 & 11.65 & 14.37 & 20.35 & 19.58 & 18.67 & 22.00 & 4.84 \\
            \rowcolor[HTML]{F5F5F5}
            \; $+$ \textbf{\textsc{ActGuide-RL}} & \textbf{41.02} & \textbf{17.30} & \textbf{8.33} & \textbf{25.24} & \textbf{24.37} & \textbf{21.07} & \textbf{21.66} & \textbf{22.05} & \textbf{24.00} & \textbf{8.31} \\
            \; $\Delta$\footnotesize{Delta} & \footnotesize{+20.51} & \footnotesize{+9.61} & \footnotesize{+8.33} & \footnotesize{+13.59} & \footnotesize{+10.00} & \footnotesize{+0.72} & \footnotesize{+2.08} & \footnotesize{+3.38} & \footnotesize{+2.00} & \footnotesize{+3.47} \\
            \midrule
            Qwen3-4B-Instruct & 17.94 & 17.30 & 0.00 & 15.53 & 8.75 & 3.57 & 0.83 & 3.82 & 14.00 & 7.96 \\
            \; $+$ RL & 33.33 & 25.00 & 0.00 & 25.24 & 13.12 & 13.92 & 9.17 & 12.06 & 18.00 & 15.26 \\
            \rowcolor[HTML]{F5F5F5}
            \; $+$ \textbf{\textsc{ActGuide-RL}} & \textbf{46.15} & \textbf{32.69} & \textbf{16.66} & \textbf{35.92} & \textbf{43.75} & \textbf{41.78} & \textbf{35.00} & \textbf{39.85} & \textbf{37.00} & \textbf{20.41} \\
            \; $\Delta$\footnotesize{Delta} & \footnotesize{+12.82} & \footnotesize{+7.69} & \footnotesize{+16.66} & \footnotesize{+10.68} & \footnotesize{+22.50} & \footnotesize{+30.63} & \footnotesize{+25.83} & \footnotesize{+27.79} & \footnotesize{+19.00} & \footnotesize{+5.15} \\
            \midrule
            Qwen3-8B & 43.58 & 26.92 & 16.66 & 32.03 & 41.87 & 31.78 & 26.25 & 32.20 & 32.00 & 23.52 \\
            \; $+$ RL & 46.15 & 32.69 & 25.00 & 36.89 & 43.75 & 44.64 & 39.16 & 42.50 & 33.00 & 21.79 \\
            \rowcolor[HTML]{F5F5F5}
            \; $+$ \textbf{\textsc{ActGuide-RL}} & \textbf{51.28} & \textbf{36.53} & \textbf{33.33} & \textbf{41.74} & \textbf{50.00} & \textbf{46.79} & \textbf{44.58} & \textbf{46.77} & \textbf{44.00} & \textbf{26.64} \\
            \; $\Delta$\footnotesize{Delta} & \footnotesize{+5.13} & \footnotesize{+3.84} & \footnotesize{+8.33} & \footnotesize{+4.85} & \footnotesize{+6.25} & \footnotesize{+2.15} & \footnotesize{+5.42} & \footnotesize{+4.27} & \footnotesize{+11.00} & \footnotesize{+4.85} \\
            \bottomrule
        \end{tabular}
        }
        \vspace{-15pt}
    \end{table}

\subsection{Main Results}
\vspace{-5pt}

    \noindent\textbf{Overall Comparison.}
    Table~\ref{tab:main-comparison} reports overall accuracy on four in-domain benchmarks, from which three observations stand out.
    \begin{itemize}[leftmargin=16pt,itemsep=2pt,topsep=0pt,parsep=0pt]
    \item \textbf{\textsc{ActGuide-RL} mitigates in-region RL capability regression.} 
    When the exploration difficulty of the RL training data does not match the base model, vanilla RL restricted to in-region exploration can lead to partial performance regression on some benchmarks. 
    For example, RL degrades Qwen2.5-7B-Instruct on GAIA and Qwen3-8B on BC-ZH, whereas \textsc{ActGuide-RL} alleviates these regressions through adaptive guidance and more effective state visitation.

    \item \textbf{\textsc{ActGuide-RL} improves exploration beyond the current reachable region.} 
    When vanilla RL fails to access sufficiently effective states on harder tasks, action guidance helps the policy move beyond its current reachable region and enables more effective state visitation. 
    This is most evident on Qwen3-4B-Instruct, where \textsc{ActGuide-RL} brings broad gains across all four benchmarks, with especially large improvements on WebWalker ($12.06\% \rightarrow 39.85\%$) and XBench ($18.00\% \rightarrow 37.00\%$).

    \item \textbf{\textsc{ActGuide-RL} delivers stable gains across base models.} 
    For base models with different capability levels, action guidance can adaptively help the policy access more effective states on each training sample according to its difficulty. 
    As a result, compared with vanilla RL, \textsc{ActGuide-RL} consistently improves all four base models, underscoring the strong adaptability of action guidance across different capability levels.
    \end{itemize}

    \noindent\textbf{Comparison with SFT + RL.}
    Another commonly used strategy to address training stalls caused by limited policy exploration is a targeted SFT cold start.
    To further analyze the role of \textsc{ActGuide-RL} relative to the SFT + RL paradigm, we also initialize the policy with an SFT cold start constructed by partially distilling Tongyi-DeepResearch-30B-A3b.
    This setting aims to explore a new possibility beyond the standard SFT + RL pipeline through action-level guidance, rather than merely pursuing performance improvements over a comprehensive SFT baseline.
    As shown in Table~\ref{tab:zerorl-sft-comparison}, even without any cold start, \textsc{ActGuide-RL} achieves performance comparable to the two-stage SFT+RL pipeline.
    Moreover, when built on the same cold-start initialized model, \textsc{ActGuide-RL} still can obtain additional gains from action guidance.
    Meanwhile, due to the mode-covering nature of SFT, cold-start initialization often degrades out-of-domain performance as the consistent performance drop on GPQA-CoT (Zero Shot), TruthfulQA and IFEVAL, whereas such degradation does not occur in \textsc{ActGuide-RL} with zero RL setting.
    
    Overall, \textsc{ActGuide-RL} offers a new alternative paradigm for agentic RL, alleviating the dependence on heavy SFT data throught the use of lighter-weight action data instead.

    \begin{table}[t]
        \centering
        \setlength{\tabcolsep}{4mm} 
        \caption{
            Comparison of \textsc{ActGuide-RL} and SFT + RL on in-domain and out-of-domain benchmarks.
        }
        \label{tab:zerorl-sft-comparison}
        \resizebox{\textwidth}{!}{
        \begin{tabular}{lccccccc}
            \toprule
            \multirow{2}{*}[-1em]{\textbf{Method}} & \multicolumn{4}{c}{\textbf{In-Domain}} & \multicolumn{3}{c}{\textbf{Out-of-Domain}} \\
            \cmidrule(lr){2-5} \cmidrule(lr){6-8}
            & GAIA & WebWalker & XBench & BC-ZH & GPQA-CoT (ZS) & TruthQA & IFEval \\
            \midrule
            ZeroRL & 25.24 & 12.06 & 18.00 & 15.26 & 35.45 & 62.17 & 81.33 \\
            \rowcolor[HTML]{F5F5F5}
            \; $+$ \textsc{\textbf{ActGuide}} & \textbf{35.92} & \textbf{39.85} & \textbf{37.00} & \textbf{20.41} & \textbf{36.93} & \textbf{62.30} & \textbf{82.99} \\
            \midrule
            SFT & 34.95 & 31.18 & 25.00 & 25.61 & 29.15 & 56.95 & \textbf{77.82} \\
            \; $+$ RL & 36.89 & 32.20 & 17.00 & 26.30 & \textbf{29.85} & 57.02 & 76.34 \\
            \rowcolor[HTML]{F5F5F5}
            \; $+$ \textsc{\textbf{ActGuide}} & \textbf{40.77} & \textbf{37.06} & \textbf{25.00} & \textbf{28.02} & 29.57 & \textbf{57.11} & 77.43 \\
            \bottomrule
        \end{tabular}
        }
        \vspace{-5pt}
    \end{table}
    
\subsection{Further Analysis and Ablation}
\vspace{-5pt}

\noindent\textbf{Training Dynamics.}
To further analyze the eff-
\vspace{-5pt}

\begin{wrapfigure}{r}{0.5\linewidth}
    \vspace{-25pt}
    \centering
    \includegraphics[width=\linewidth]{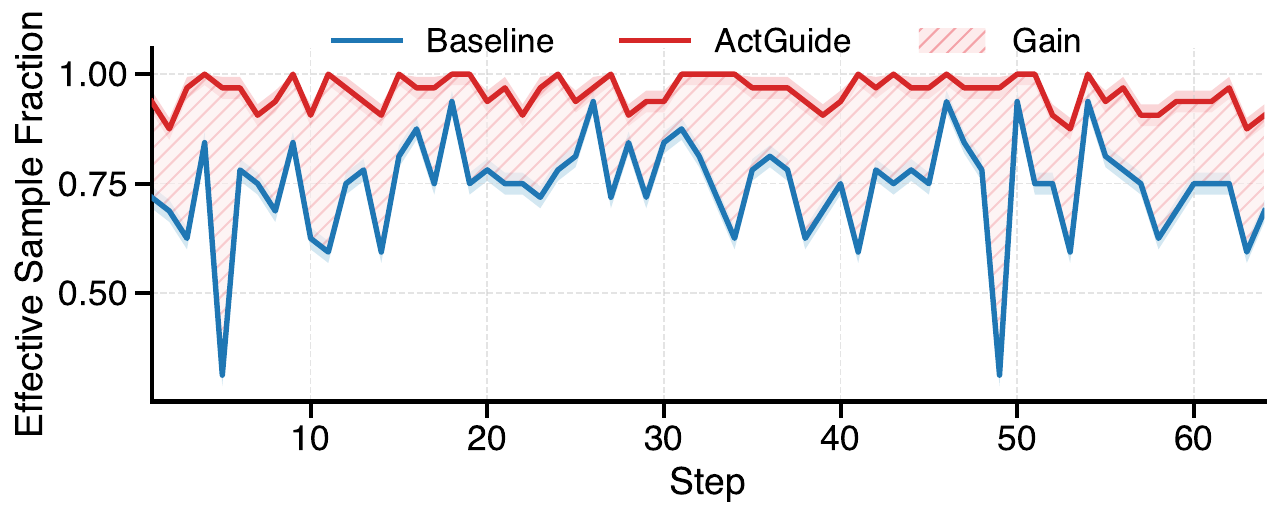}
    \vspace{-15pt}
    \caption{
        Trainable groups dynamic.
    }
    \label{fig:nonzero_dynamics}
    \vspace{-10pt}
\end{wrapfigure}

ect of action guidance on training dynamics, we track the proportion of rollout groups that provide effective learning signals during training, as shown in Figure~\ref{fig:nonzero_dynamics}. 
Specifically, we find action data helps the policy discover effective training signals in a higher proportion of samples, while the unguided baseline is frequently hindered by exploration barriers and therefore wastes many rollouts on ineffective state visitation.
This suggests that \textsc{ActGuide-RL} improves exploration beyond the current reachable region, allowing the policy to learn from out-region tasks.

\begin{figure}[t!]
    \centering
    \begin{minipage}[c]{0.44\textwidth}
        \centering
        \includegraphics[
            width=\linewidth,
            keepaspectratio
        ]{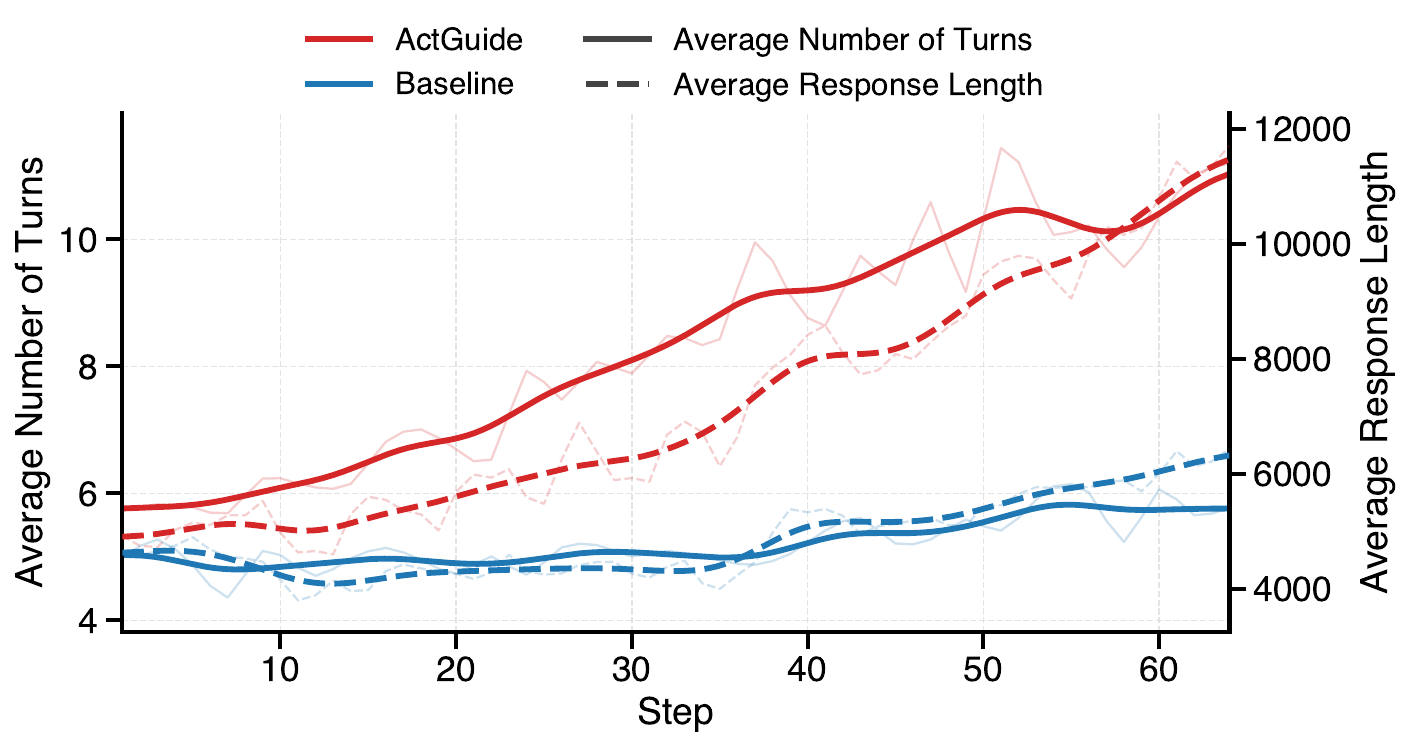}
        \vspace{-18pt}
        \captionof{figure}{
            Training dynamics on number of interaction turns and response length.
        }
        \label{fig:num turns}
    \end{minipage}
    \hfill
    \begin{minipage}[c]{0.54\textwidth}
        \centering
        \captionof{table}{
            Agent performance under different interaction turn budgets.
        }
        \label{tab:turn-budget}
        \resizebox{\linewidth}{!}{
        \begin{tabular}{lcccc}
            \toprule
            \textbf{Turn Budget} & \textbf{GAIA} & \textbf{WebWalker} & \textbf{XBench} & \textbf{BC-ZH} \\
            \midrule
            2 & 0.97 & 9.26 & 5.00 & 1.04 \\
            4 & 18.44 & 33.97 & 33.00 & 4.84 \\
            8 & 19.41 & 35.00 & 33.00 & 16.96 \\
            16 & 27.18 & 37.55 & 35.00 & 17.99 \\
            32 & 35.92 & 39.85 & 37.00 & 20.41 \\
            \bottomrule
        \end{tabular}
        }
    \end{minipage}
\end{figure}

\noindent\textbf{Towards Complex Interaction.}
A central challenge of agentic RL without cold-start is that the policy struggles to acquire complex interaction skills within its in-region tasks.
Fortunately, we find that \textsc{ActGuide-RL} enables even a small model such as Qwen3-4B-Instruct without any cold-start initialization, to gradually acquire complex interaction capability, as reflected by the steady increase in the number of interaction turns and generated tokens over training in Figure~\ref{fig:num turns}.
To further verify whether these increased interactions are indeed effective, we vary the interaction budget at evaluation time and observe that performance consistently improves as the budget increases in Table~\ref{tab:turn-budget}.

\noindent\textbf{Ablation Study on \textsc{ActGuide-RL}.} We 
\vspace{-5pt}

\begin{wrapfigure}{r}{0.55\linewidth}
    \vspace{-25pt}
    \centering
    \captionof{table}{
        Ablation study of \textsc{ActGuide-RL}. 
    }
    \label{tab:ablation}
    \vspace{0pt}
    \resizebox{\linewidth}{!}{
    \begin{tabular}{lccc}
        \toprule
        \textbf{Method} & \textbf{GAIA} & \textbf{WebWalker} & \textbf{XBench} \\
        \midrule
        \textbf{\textsc{ActGuide-RL}} & 35.92 & 39.85 & 37.00 \\
        \; $-$ Minimal-Intervention (Adaptive) & 27.18 & 35.00 & 34.00 \\
        \; $-$ Minimal-Intervention (Fallback) & 24.27 & 23.82 & 19.00 \\
        \; $-$ Mixed-Policy Optimization & 22.32 & 21.76 & 21.00 \\
        \bottomrule
    \end{tabular}
    \vspace{20pt}
    }
    \vspace{5pt}
    \includegraphics[width=\linewidth]{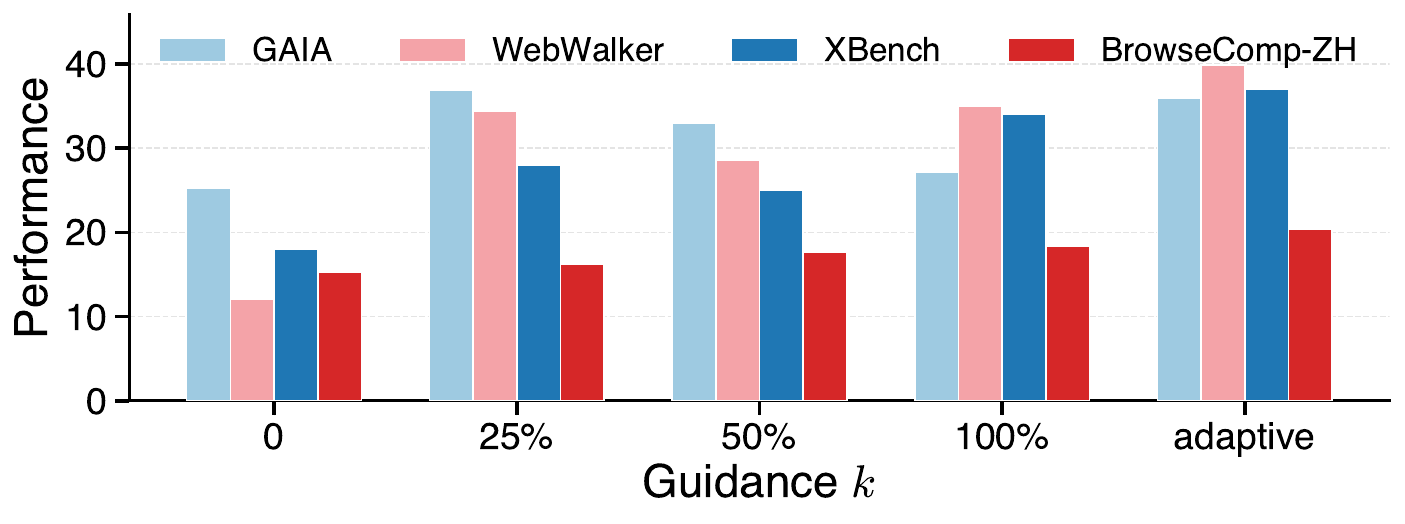}
    \vspace{-20pt}
    \captionof{figure}{
        Performance of different guidance strength.
    }
    \label{fig:guidance_k}
    \vspace{-10pt}
\end{wrapfigure}

conduct ablation studies on several key design choices in \textsc{ActGuide-RL}, including the adaptive guidance mechanism, the fallback guidance, and mixed-policy optimization.
As shown in Table~\ref{tab:ablation}, removing either the adaptive or fallback guidance mechanism causes performance degradation to different extents. 
We further compare fixed guidance ratios in Figure~\ref{fig:guidance_k}, and again find that dynamic guidance performs best.
These results indicate that action guidance is not effective simply because more guidance is provided, nor is less always better.
Rather, the best performance comes from minimally introducing guidance in an adaptive manner according to the policy capability.
Removing mixed-policy optimization also causes a substantial performance drop, since it breaks the pathway that transfers behaviors acquired under guidance into the test-time unguided capability.

\noindent\textbf{Sensitivity to Action Noise.} When consi-
\vspace{-5pt}

\begin{wraptable}{r}{0.55\linewidth}
    \vspace{-20pt}
    \centering
    \caption{
        Results of different action noise ratio.
    }
    \label{tab:noise-ratio}
    \vspace{-5pt}
    \resizebox{\linewidth}{!}{
    \begin{tabular}{lcccc}
        \toprule
        \textbf{Noise Ratio} & \textbf{GAIA} & \textbf{WebWalker} & \textbf{XBench} & \textbf{BC-ZH} \\
        \midrule
        0\% & 35.92 & \textbf{39.85} & 37.00 & \textbf{20.41} \\
        10\% & \textbf{39.81} & 39.26 & \textbf{38.00} & 19.03 \\
        20\% & 29.12 & 37.94 & 35.00 & 17.64 \\
        \bottomrule
    \end{tabular}
    }
    \vspace{-10pt}
\end{wraptable}

considering scaling up the collection of action data, an important factor is data noise, as human demonstrations may contain a substantial amount of meaningless or irrelevant actions while completing certain tasks.
Here we simulate such noise by randomly inserting task-irrelevant actions into the original per-sample action trajectories, and then perform the same \textsc{ActGuide-RL} training.
As shown in Table~\ref{tab:noise-ratio}, \textsc{ActGuide-RL} is not overly sensitive to action noise.
It maintains stable performance under a 10\% noise ratio, while a further increase to 20\% leads to a performance drop.

%% file: sec/2_related_work.tex
\section{Related Work}
\vspace{-5pt}

\subsection{Agentic RL}
\vspace{-5pt}
Recent advancements in RL~\citep{shao_deepseekmath_2024,yu_dapo_2025,schulman2017proximal,chu2026gpg,feng_group--group_2025} enable end-to-end training of agents that can interact with environments, make sequential decisions, and optimize toward long-horizon objectives.
This makes agentic RL a pivotal paradigm for both foundation-model capability building~\citep{openai_gpt_5_4,anthropic_opus_4_6,team2026kimi,qwen35blog} and domain-specific agent post-training~\citep{jin_search-r1_2025,ji2026thinking,yu2025medresearcher,team2025tongyi,chu2026redsearcher}.
Since effective agentic RL strongly depends on the base model to explore valid training signals, existing methods often rely on a cold-start before RL or on alternating SFT and RL to dynamically align the model capabilities with the target tasks~\citep{dong2025tool,pang2024iterative,chen2025sft,chen2025beyond,deng2025openvlthinker}.
Some works instead adopt dynamic task scheduling~\cite{yao2026coba,gu2026actor,zhai2025agentevolver} or curriculum learning~\cite{li2026adacurl,jiang2025vcrl} to ensure that the difficulty of training tasks is well matched to the evolving capabilities of the model.
A line of work most closely related to ours constructs curriculum learning examples from existing SFT data~\citep{yi2026pivotrl,wang2025let}, or directly uses this data as hints to guide the model toward obtaining meaningful learning signals on hard tasks~\citep{huang2025boosting,nath2025adaptive,wu2026learn}.
Unlike these approaches, \textsc{ActGuide-RL} seeks to leverage more readily available action data, offering greater practical value and stronger scaling potential.

\vspace{-5pt}
\subsection{RL from Demonstration}
\vspace{-5pt}
Our work is also related to reinforcement learning from demonstrations (RLfD)~\citep{nair2018overcoming,libardi2021guided}, where demonstrations usually take the form of expert trajectories, typically as full reasoning-and-action traces in agent settings.
Classical RLfD methods often use demonstration trajectories to bootstrap exploration in sparse-reward settings, for example by retaining them in the replay buffer and combining RL updates with auxiliary imitation losses~\citep{rajeswaran2017learning,hester2018deep,vecerik2017leveraging}.
Following a similar intuition, several recent LLM studies incorporate off-policy expert trajectories into online RL to mitigate sparse-reward and hard-exploration challenges~\citep{fu2025srft,liang2025squeeze,zhang2026onpolicyrlmeetsoffpolicy,ma2025learning}.
Specifically, LUFFY~\citep{yan2025learning} incorporates off-policy expert trajectories into online RL through mixed-policy optimization, using regularized importance shaping to avoid rigid imitation.
Guide~\citep{nath2025adaptive} utilizes adaptive hint-guided off-policy trajectories into online RL, reweighting them to improve exploration while training a policy that no longer relies on hints at inference time.
Unlike these demonstration-based approaches, this work focuses on learning agentic policy from action guidance, with minimal intervention that adopts to tasks of different difficulty.

%% file: sec/5_conclusion.tex
\section{Conclusion}
\vspace{-5pt}

We present \textsc{ActGuide-RL}, a framework that leverages readily available action data as plan-style guidance to help agentic RL overcome exploration barriers beyond the base policy's reachable region.
By introducing guidance only as an adaptive fallback and optimizing guided and unguided rollouts jointly, \textsc{ActGuide-RL} internalizes exploration gains while reducing the off-policy risks of excessive intervention.
Across search-agent benchmarks, these design choices yield consistent gains over vanilla RL and performance comparable to SFT+RL, without requiring a supervised cold start.
Further analyses show that these gains are accompanied by more effective multi-step interaction and arise from adaptive, minimally intrusive guidance rather than simply stronger intervention.
These findings suggest that scalable action-only traces can serve as a practical post-training signal for complex agentic interaction, complementing or partially replacing costly supervised demonstrations.

%% file: sec/6_appendix.tex
\section*{Appendix}

\setlength{\jot}{8pt}

\input{sec/dataset.tex}

\input{sec/experiment_details.tex}

\input{sec/theory.tex}
\input{sec/case_study.tex}

\input{sec/limitation.tex}

%% file: sec/dataset.tex
\section{Datasets}
\label{sec:datasets}

\subsection{Train}

We adopted the search agent RL training data from ASearcher~\cite{gao2025turnsunlockinglonghorizonagentic}. 
Specifically, we sampled 2k instances to be used for RL training across all our experimental settings.

Additionally, to acquire the action data, we utilized Tongyi-DeepResearch-30B-A3B~\cite{tongyidr} as the expert model to conduct rejection sampling. 
Consistent with our experimental settings, we restricted the toolset to only two types of tools: \textit{web-search} and \textit{web-visit}. 
The correct trajectories generated by the expert model were collected, from which we exclusively extracted the atomic per-step operations (i.e., the tool call names and corresponding arguments) to serve as the candidate complete action guidance trajectories for each sample instance.
The action turns statistics of the RL training data are shwon in Figure~\ref{fig:action-turns-stats}.

For the SFT data, we sampled another disjoint subset from the ASearcher dataset in a similar manner. 
We also employed Tongyi-DeepResearch-30B-A3B to conduct rejection sampling, yielding 4k complete search agent trajectories. 
Unlike the action data, the SFT data inherently preserves the comprehensive elements of the trajectory, emphasizing the retention of the full Chain-of-Thought (CoT) reasoning, explicit tool calls, and corresponding tool responses.

\subsection{Evaluation}

To comprehensively evaluate our proposed search agent's capabilities in complex reasoning and deep search, we adopt several standard and challenging deep search benchmarks. The details of the utilized datasets are outlined below:

\begin{itemize}[leftmargin=16pt,itemsep=2pt,topsep=0pt,parsep=0pt]
\item \textbf{GAIA}~\citep{mialon_gaia_2023} is a challenging general AI assistant benchmark comprising real-world questions that require deep reasoning and web browsing. Following previous works, we utilize a subset of 103 text-only questions to test the fundamental capabilities of our system.

\item \textbf{WebWalkerQA}~\citep{wu_webwalker_2025} evaluates LLMs in complex web traversal and information gathering. It contains 680 QA tasks requiring agents to systematically traverse multiple dynamic web pages to discover multi-layered information via multi-hop reasoning.

\item \textbf{XBench}~\citep{chen_xbench_2025} specifically assesses the deep search capabilities of AI agents. It comprises 100 questions and dynamically evaluates high-order information retrieval and tool usage abilities across real-world scenarios, considering both search breadth and reasoning depth.

\item \textbf{BrowseComp-ZH}~\citep{zhou2025browsecompzh} is a complex benchmark measuring web browsing and reasoning within the Chinese internet ecosystem. It comprises 289 native, multi-hop retrieval questions strictly cross-validated across major search engines to test sophisticated multi-step reasoning.
\end{itemize}

To further assess out-of-domain generalization beyond the search-agent setting, we also evaluate on three general-purpose benchmarks:

\begin{itemize}[leftmargin=16pt,itemsep=2pt,topsep=0pt,parsep=0pt]

\item \textbf{GPQA}~\citep{rein2023gpqa} is a graduate-level, Google-proof question-answering benchmark covering difficult scientific domains. We use it as an out-of-domain reasoning benchmark beyond the search-agent setting.

\item \textbf{TruthfulQA}~\citep{lin2022truthfulqa} evaluates whether language models generate truthful answers rather than imitating common misconceptions, providing an out-of-domain test of factual robustness.

\item \textbf{IFEval}~\citep{zhou2023instruction} measures instruction-following ability with verifiable constraints, serving as an out-of-domain benchmark for general alignment and controllability.
\end{itemize}

%% file: sec/experiment_details.tex
\section{Experiment Details}
\label{sec:experiment details}

\noindent\textbf{Implementation Details.}
Our implementation is built upon VeRL.
All the experimental hyperparameter settings are listed in Table~\ref{tab:hyperparameters}.
During guided rollout, we inject the action data into the query prompt as plan-style reference guidance, so that the policy can follow the partial action trajectory while still completing any missing steps by itself.
The exact prompt format is shown in Template~\ref{box:ActGuide prompt}.

\begin{figure}[htbp]
    \centering
    \begin{minipage}[t]{0.40\textwidth}
        \vspace{0pt}
        \centering
        \captionof{table}{
            Hyperparameters for search-agent RL training.
        }
        \label{tab:hyperparameters}
        \resizebox{\linewidth}{!}{
        \begin{tabular}{lc}
            \toprule
            Config & Setting \\
            \midrule
            optimizer & AdamW \\
            learning rate & 1e-6 \\
            KL coefficient & 0.001 \\
            training data & 2,000 \\
            total training steps & 64 \\
            training batch size & 32 \\
            PPO mini batch size & 16 \\
            group size & 8 \\
            max response length & 40,960 \\
            max observation length & 8,000 \\
            max turns & 30 \\
            $\epsilon_{\text{clip}_\text{low}}$ & 0.2 \\
            $\epsilon_{\text{clip}_\text{high}}$ & 0.2 \\
            \bottomrule
        \end{tabular}
        }
    \end{minipage}
    \hfill
    \begin{minipage}[t]{0.40\textwidth}
        \vspace{0pt}
        \centering
        \includegraphics[width=\linewidth]{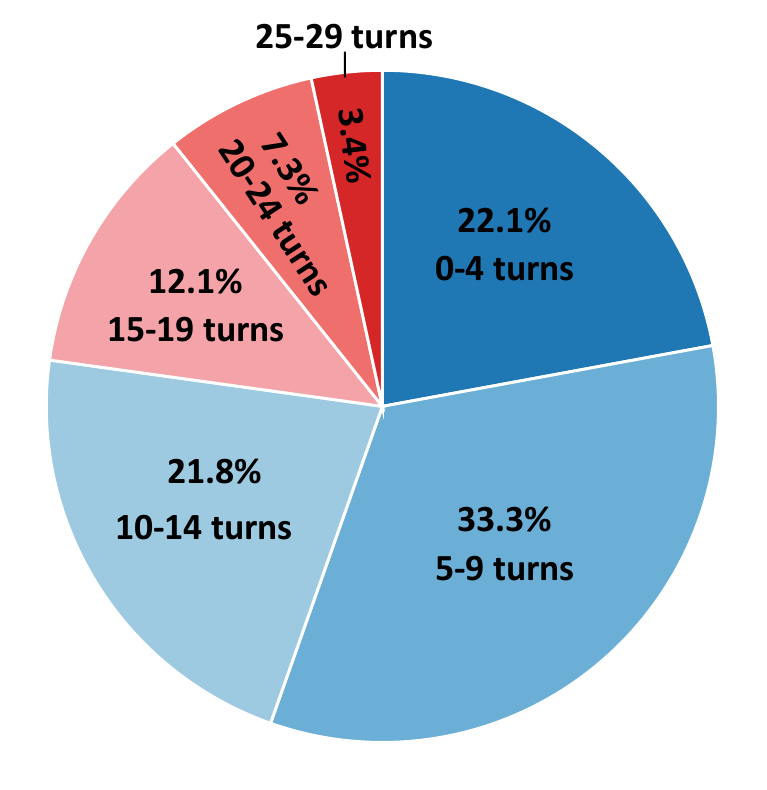}
        \caption{
            Action turns statistics of RL training data.
        }
        \label{fig:action-turns-stats}
    \end{minipage}
\end{figure}

\phantomsection
\label{box:ActGuide prompt}
\begin{rawtemplatebox}{\textsc{ActGuide} Prompt Template}
Answer the given question using the given tools.
For each step, you must conduct a thought section to reason before calling any tools.

Question: {question}

Follow the partial action trajectory hint to take actions. Note that the trajectory may not be complete, and you may still need to make extra tool calls to finish the task.

Reference action trajectory hint: {action trajectory}
\end{rawtemplatebox}

\phantomsection
\label{box:llm judge prompt}
\begin{rawtemplatebox}{LLM-Judge Prompt Template}
Based on the given question, standard answer, and model-predicted answer,
evaluate whether the model's response is correct. Your task is to classify
the result as: [CORRECT] or [INCORRECT].

First, we'll list examples for each category, then you'll evaluate a new
question's predicted answer.

Here are examples of [CORRECT] responses:
```
Question: What are the names of Barack Obama's children?
Standard Answer: Malia Obama and Sasha Obama
Model Prediction 1: Malia Obama and Sasha Obama
Model Prediction 2: Malia and Sasha
Model Prediction 3: Most would say Malia and Sasha, but I'm not sure, I should verify
Model Prediction 4: Barack Obama has two daughters, Malia Ann and Natasha Marian,
                    commonly known as Malia Obama and Sasha Obama.
```

These responses are all [CORRECT] because they:
  - Fully include the important information from the standard answer.
  - Don't contain any information that contradicts the standard answer.
  - Focus only on semantic content; language, capitalization, punctuation,
    grammar, and order aren't important.
  - Vague statements or guesses are acceptable as long as they include the
    standard answer and don't contain incorrect information or contradictions.

Here are examples of [INCORRECT] responses:
```
Question: What are the names of Barack Obama's children?
Standard Answer: Malia Obama and Sasha Obama
Model Prediction 1: Malia
Model Prediction 2: Malia, Sasha and Susan or Sasha Obama or Malia Obama,
                    or Natasha Marian, or Einstein
Model Prediction 3: While I don't know their exact names, I can tell you
                    Barack Obama has two children.
Model Prediction 4: You might be thinking of Betsy and Olivia. But you should
                    verify the details with the latest references. Is that
                    the correct answer?
Model Prediction 5: Barack Obama's children
```

These responses are all [INCORRECT] because they:
  - Contain factual statements that contradict the standard answer.
  - Are empty or merely repeat the question.
  - Enumerate multiple answers or repeat the answer.

Pay special attention to the following:
  - The standard answer may contain responses to multiple aspects of the
    question. Within the same aspect, there may be different descriptions,
    all of which are correct and are given in the same bracket, connected by
    commas. For example:
      Question: What is the name of ByteDance's AI model?
      Standard Answer: [[Doubao, Skylark]]
      Correct Predictions: Doubao; Doubao, Skylark; Skylark

  - For standard answers containing responses to different aspects, the model
    needs to provide answers to all aspects to be considered correct. There is
    no [PARTIALLY CORRECT] output option. These answers will be given in
    different brackets. For example:
      Question: Who are the members of TFBOYS?
      Standard Answer: [[Wang Junkai][Wang Yuan][Yi Yangqianxi]]
      Correct Prediction: Wang Junkai, Wang Yuan, Yi Yangqianxi
      Incorrect Prediction: Wang Junkai, Yi Yangqianxi

Also note the following points:
  - For questions with numerical standard answers, the predicted answer should
    match the standard answer. For example:
      Question: What is the total length in meters of the Huangpu River Bridge
                on the Jinshan Railway?
      Standard Answer: 3518.17
      Correct Predictions: 3518; 3518.1; 3518.17
      Incorrect Predictions: 3520; 3600

  - If the model prediction doesn't directly answer the question, attempts to
    circumvent, or fails to directly provide the standard answer, it's
    considered an [INCORRECT] answer. For example:
      Question: Who is JJ Lin's wife?
      Standard Answer: Ding Wenqi
      Incorrect Predictions: JJ Lin's wife; JJ Lin's wife should be excellent;
                             JJ Lin's wife might be a public figure

  - If the standard answer contains more information than the question asks
    for, the predicted answer only needs to include the information mentioned
    in the question. For example:
      Question: What is the main chemical component of magnesite?
      Standard Answer: Magnesium carbonate (MgCO3)
      Correct Predictions: Magnesium carbonate; MgCO3

  - If information omitted in the predicted answer can be clearly inferred
    from the question, it's considered correct. For example:
      Question: The Nuragic ruins of Barumini were listed as a World Cultural
                Heritage by UNESCO in 1997, so where is this site located?
      Standard Answer: Sardinia, Italy
      Correct Prediction: Sardinia

  - If it's clear that different translations of a name refer to the same
    person, it's considered correct. For example, if the standard answer is
    Robinson, answers like Lubinson or Lubinsun are both correct.

  - Focus more on the match between the standard answer and the model
    prediction than whether the standard answer itself is correct.

Below is a new question example. Please reply with only [CORRECT] or
[INCORRECT], without apologies or corrections to your own errors. Just
evaluate the answer.
```
Question: {question}
Standard Answer: {correct_answer}
Predicted Answer: {response}
```

Evaluate this new question's predicted answer as one of the following:
A. [CORRECT]
B. [INCORRECT]

Return only the option representing [CORRECT] or [INCORRECT], i.e., just
return A or B, without adding any other text.
\end{rawtemplatebox}


\noindent\textbf{Compute Resources.}
All training and rollout experiments were conducted on nodes equipped with 8 NVIDIA H20 GPUs.
The LLM judge used for reward assignment and test-time evaluation required additional serving resources, for which we used a separate node with 8 NVIDIA H20 GPUs.

\noindent\textbf{Different Guidance Methods.}
We also compare different ways of injecting action guidance for LLM-based agent.
Besides the unguided setting, we consider an assistant-prefix format following prior hint-based methods, where the action reference is prepended as a generated prefix and the model continues generation from it.
We also consider a user-assistant message format, where the action data are converted into the corresponding tool calls and tool responses and then assembled as multi-turn messages before the model continues generation.
As shown in Table~\ref{tab:inject-method}, inserting the action trajectory as a reference plan in the query prompt achieves the best Reward@1, suggesting that lightweight plan-style guidance is more effective than directly prefixing or replaying actions for LLM agent.

\begin{table}[htbp]
    \centering
    \caption{
        Injection method comparison.
    }
    \label{tab:inject-method}
    \resizebox{\textwidth}{!}{
    \begin{tabular}{lcccc}
        \toprule
        \textbf{Inject Method} & \textbf{Unguidance} & \textbf{Assistant Prefix} & \textbf{User-Assistant Messages} & \textbf{Reference Plan in Query Prompt} \\
        \midrule
        \textbf{Reward@1} & 57.90 & 74.50 & 80.10 & \textbf{85.70} \\
        \bottomrule
    \end{tabular}
    }
\end{table}

\noindent\textbf{Action Data for On-policy Self Distillation.}
Beyond using action data to guide the policy toward better state visitation, we also explore whether it can be used for on-policy self distillation (OPSD)~\citep{zhao2026self,shenfeld_self-distillation_2026,hubotter2026reinforcement}.
Specifically, OPSD still samples trajectories from the unguided policy, but uses action-conditioned guided logits as the distillation target on these on-policy rollouts.
Formally, for an unguided rollout $\tau \sim \pi_{\theta_{\rm old}}(\cdot \mid x)$, we re-evaluate each visited prefix $z_{<t}$ with the same model additionally conditioned on the action guidance $g$, and optimize
\begin{equation}
    \mathcal{L}_{\mathrm{OPSD}}(\theta)
    =
    \mathbb{E}_{x \sim \mathcal{D},\, \tau \sim \pi_{\theta_{\rm old}}(\cdot \mid x)}
    \left[
    \frac{1}{T}\sum_{t=1}^{T}
    \mathbb{D}_{\mathrm{KL}}\!\left(
    \mathrm{sg}\!\left[\pi_{\theta_{\rm old}}(\cdot \mid z_{<t}, g)\right]
    \,\|\, \pi_{\theta}(\cdot \mid z_{<t})
    \right)
    \right],
\end{equation}
where $\mathrm{sg}[\cdot]$ denotes stop-gradient, so the guided distribution serves only as a token-level teacher while the learned policy remains unguided at inference time.
As shown in Table~\ref{tab:guidance-use}, OPSD can improve model performance, but the gains remain limited because the visited states are still determined by the base unguided policy.
Therefore, it does not fundamentally resolve the ineffective state-visitation problem when the agent cannot reach useful states by itself.

\begin{table}[htbp]
    \centering
    \caption{
        Comparison between \textsc{ActGuide-RL} and OPSD.
    }
    \label{tab:guidance-use}
    \begin{tabular}{lccc}
        \toprule
        \textbf{Guidance Use} & \textbf{GAIA} & \textbf{WebWalker} & \textbf{XBench} \\
        \midrule
        \textbf{\textsc{ActGuide-RL}} & 35.92 & \textbf{39.85} & \textbf{37.00} \\
        OPSD & \textbf{36.89} & 30.29 & 26.00 \\
        \bottomrule
    \end{tabular}
\end{table}

%% file: sec/theory.tex
\section{Theoretical Analysis}
\label{sec:theory}

\subsection{Covariance Form of the Token-Level Off-Policy Risk}

In Section~\ref{sec:minimal_intervention}, let $\tau=(z_1,\ldots,z_{|\tau|})$ be the generated token sequence.
We define the token-level importance ratio under guidance level $g_k$ as
\begin{equation}
    r_j^{(k)}
    :=
    \frac{\pi_\theta(z_j \mid z_{<j})}
         {\pi_\theta(z_j \mid z_{<j}, g_k)},
\end{equation}
and the corresponding cumulative log-ratio shift as
\begin{equation}
    \mathcal{L}_k(\tau)
    :=
    \sum_{j=1}^{|\tau|} \log r_j^{(k)}.
\end{equation}
The off-policy risk is then defined as
\begin{equation}
    R_k
    :=
    \mathrm{Var}_{\tau \sim \pi_\theta(\cdot \mid s, g_k)}\!\left(\mathcal{L}_k(\tau)\right).
\end{equation}

By variance expansion, we have
\begin{equation}
    R_k
    =
    \mathrm{Var}_{\tau \sim \pi_\theta(\cdot \mid s, g_k)}
    \left(
        \sum_{j=1}^{|\tau|} \log r_j^{(k)}
    \right).
\end{equation}
Therefore,
\begin{equation}
    R_k
    =
    \sum_{j=1}^{|\tau|}
    \mathrm{Var}_{\tau \sim \pi_\theta(\cdot \mid s, g_k)}\!\left(\log r_j^{(k)}\right)
    +
    2
    \sum_{j<j'}
    \mathrm{Cov}_{\tau \sim \pi_\theta(\cdot \mid s, g_k)}
    \!\left(
        \log r_j^{(k)},
        \log r_{j'}^{(k)}
    \right),
\end{equation}
where the second summation ranges over all distinct token pairs in the rollout.

This decomposition shows that the off-policy risk consists of two components:
(1) token-wise variance terms, which capture local distribution mismatch at each generation step, and
(2) cross-token covariance terms, which capture the dependence structure of these mismatches along the autoregressive trajectory.
Hence, stronger guidance may increase not only the magnitude of token-level deviations, but also their correlation across the rollout, both of which contribute to larger internalization risk.

In particular, if the token-level log-ratio shifts were independent, then all covariance terms would vanish and $R_k$ would reduce to the sum of token-wise variances.
However, in autoregressive agent generation, token dependencies are intrinsic, and the covariance terms generally cannot be ignored.
This motivates using the variance of the cumulative log-ratio shift as a compact measure of off-policy risk.

\subsection{Risk-Constrained View of Minimal Intervention}

We formalize the minimal-intervention rule in Section~\ref{sec:minimal_intervention} as a risk-constrained selection problem.
For a guidance level $g_k$, define the group recovery probability
\begin{equation}
    Q_k
    :=
    \mathbb{P}_{\{\tau_i^{(k)}\}_{i=1}^N \sim \pi_\theta(\cdot\mid x,g_k)}
    \!\left(
        \max_{i\le N}Y(\tau_i^{(k)}) \ge \delta
    \right),
\end{equation}
where $N$ is the rollout group size and $\delta$ is the success threshold.
Given a target recovery level $\rho\in(0,1)$, the least-risk sufficient guidance level is the solution of
\begin{equation}
    \min_{k\in\{0,\ldots,K\}} R_k
    \quad
    \mathrm{s.t.}
    \quad
    Q_k \ge \rho .
    \label{eq:risk_constrained_guidance}
\end{equation}

\begin{assumption}[Monotone recovery and risk]\label{assump:monotone_guidance}
    The ordered guidance family $g_0\prec g_1\prec\cdots\prec g_K$ satisfies:
    \begin{equation}
        Q_0 \le Q_1 \le \cdots \le Q_K,
        \qquad
        R_0 \le R_1 \le \cdots \le R_K .
    \end{equation}
\end{assumption}

\begin{proposition}[Minimal sufficient guidance is risk-optimal]\label{prop:risk_minimal_guidance}
    Under Assumption~\ref{assump:monotone_guidance}, if the feasible set of Eq.~\ref{eq:risk_constrained_guidance} is non-empty, then
    \begin{equation}
        k_\rho^\star
        :=
        \min\{k\in\{0,\ldots,K\}: Q_k\ge\rho\}
    \end{equation}
    is an optimal solution of Eq.~\ref{eq:risk_constrained_guidance}.
\end{proposition}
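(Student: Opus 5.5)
The argument is a direct feasibility-plus-monotonicity check. I would first record that $k_\rho^\star$ is well defined. The feasible set $\mathcal{F}:=\{k\in\{0,\ldots,K\}: Q_k\ge\rho\}$ is assumed non-empty and is a subset of a finite totally ordered index set, so its minimum exists. By construction $Q_{k_\rho^\star}\ge\rho$, hence $k_\rho^\star$ is feasible for Eq.~\ref{eq:risk_constrained_guidance}.

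The second step shows optimality. Take any feasible $k\in\mathcal{F}$. By minimality of $k_\rho^\star$ in $\mathcal{F}$, we have $k_\rho^\star\le k$. The risk half of Assumption~\ref{assump:monotone_guidance}, $R_0\le R_1\le\cdots\le R_K$, then gives $R_{k_\rho^\star}\le R_k$, since the chain can be followed from index $k_\rho^\star$ up to $k$. So $k_\rho^\star$ attains the minimum of $R_k$ over $\mathcal{F}$ and is an optimal solution.

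As a remark, I would add that the recovery half of the assumption, $Q_0\le\cdots\le Q_K$, is not needed for optimality itself. Its role is structural: it makes $\mathcal{F}$ an upper set $\{k_\rho^\star,\ldots,K\}$. This is what justifies locating $k_\rho^\star$ by binary search, as in Eq.~\ref{eq:binary_search} and Algorithm~\ref{alg:adaptive_guidance}, since the predicate $Q_k\ge\rho$ is monotone in $k$. I would also note that optimality need not be unique, because ties $R_{k}=R_{k_\rho^\star}$ are allowed; the statement only claims that $k_\rho^\star$ is \emph{an} optimal solution.

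There is essentially no obstacle here. The only point requiring care is to use the risk monotonicity in the correct direction, and to check that the case $k_\rho^\star=0$ (no guidance needed) is covered, which it is, since the argument never uses $k_\rho^\star\ge1$.
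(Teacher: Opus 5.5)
Your proof is correct and follows the paper's argument: $k_\rho^\star$ is feasible by definition, minimality gives $k \ge k_\rho^\star$ for every feasible $k$, and risk monotonicity then gives $R_k \ge R_{k_\rho^\star}$. Your side remark is also accurate: the recovery monotonicity $Q_0\le\cdots\le Q_K$ is not used for optimality, and its role is only to make the feasible set an upper set, which is what justifies the binary search.
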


\begin{proof}
    By definition, $k_\rho^\star$ is feasible.
    For any other feasible level $k$, minimality of $k_\rho^\star$ implies $k\ge k_\rho^\star$.
    Since $R_k$ is non-decreasing in $k$, we have $R_k\ge R_{k_\rho^\star}$.
    Hence no feasible guidance level has smaller off-policy risk than $k_\rho^\star$.
\end{proof}

This proposition gives a constrained interpretation of Eq.~\ref{eq:binary_search}: minimal intervention does not maximize guidance strength, but selects the lowest-risk level that satisfies a recovery requirement.
When $Q_k$ is not known exactly, it can be estimated by repeated rollout groups.
Let $\widehat Q_k$ be the empirical mean of $m$ independent group-recovery indicators at level $k$.

\begin{corollary}[Empirical identification under a margin]\label{cor:empirical_guidance}
    Suppose Assumption~\ref{assump:monotone_guidance} holds and there exists a margin $\Delta>0$ such that
    \begin{equation}
        Q_k \le \rho-\Delta \quad \forall k<k_\rho^\star,
        \qquad
        Q_k \ge \rho+\Delta \quad \forall k\ge k_\rho^\star .
    \end{equation}
    If
    \begin{equation}
        m \ge \frac{1}{2\Delta^2}\log\frac{2(K+1)}{\xi},
    \end{equation}
    then with probability at least $1-\xi$, the empirical rule
    \begin{equation}
        \widehat k_\rho
        :=
        \min\{k:\widehat Q_k\ge\rho\}
    \end{equation}
    recovers $k_\rho^\star$.
\end{corollary}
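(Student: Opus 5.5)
The plan is a standard concentration-plus-union-bound argument. For each level $k\in\{0,\ldots,K\}$, $\widehat Q_k$ is the average of $m$ i.i.d.\ Bernoulli indicators with mean $Q_k$. We want a single good event on which every $\widehat Q_k$ lies on the correct side of $\rho$. Define
\begin{equation}
    \mathcal E := \bigcap_{k<k_\rho^\star}\{\widehat Q_k < \rho\}\;\cap\;\{\widehat Q_{k_\rho^\star}\ge\rho\}.
\end{equation}
On $\mathcal E$, no $k<k_\rho^\star$ satisfies $\widehat Q_k\ge\rho$, while $k_\rho^\star$ does. Hence the empirical minimum $\widehat k_\rho$ equals $k_\rho^\star$. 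Only the indices $k\le k_\rho^\star$ matter, which is at most $K+1$ events. Levels above $k_\rho^\star$ play no role in the minimum, so no control of them is needed.

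To bound each bad event, apply Hoeffding's inequality to $[0,1]$-valued variables.

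For $k<k_\rho^\star$, the margin gives $Q_k\le\rho-\Delta$. Then $\widehat Q_k\ge\rho$ requires $\widehat Q_k-Q_k\ge\Delta$, whose probability is at most $e^{-2m\Delta^2}$.

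For $k=k_\rho^\star$, the margin gives $Q_k\ge\rho+\Delta$. Then $\widehat Q_k<\rho$ requires $Q_k-\widehat Q_k>\Delta$, also with probability at most $e^{-2m\Delta^2}$.

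Using two-sided Hoeffding for uniformity, each event fails with probability at most $2e^{-2m\Delta^2}$. A union bound over at most $K+1$ levels gives $\mathbb P(\mathcal E^c)\le 2(K+1)e^{-2m\Delta^2}$. This is at most $\xi$ exactly when $m\ge \frac{1}{2\Delta^2}\log\frac{2(K+1)}{\xi}$, the stated condition.

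Assumption~\ref{assump:monotone_guidance} is used only implicitly. It guarantees that $k_\rho^\star$ is well defined and consistent with the margin condition, and that the feasible set is nonempty. Optimality of $k_\rho^\star$ then transfers via Proposition~\ref{prop:risk_minimal_guidance}.

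The one step needing care is independence. Each $\widehat Q_k$ must be built from $m$ independent group-recovery indicators at a fixed level $k$, with fresh rollouts per level, so that Hoeffding applies to each level. The union bound needs no independence across levels.

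There are also two remarks to make. First, the binary search in Algorithm~\ref{alg:adaptive_guidance} queries only $O(\log K)$ levels. Under $\mathcal E$ the empirical indicator $\mathbb 1\{\widehat Q_k\ge\rho\}$ is monotone on every level queried, so binary search also returns $k_\rho^\star$. Second, the constant $2$ inside the logarithm is slightly loose, since one-sided bounds suffice, which only helps.
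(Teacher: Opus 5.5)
Your proof is correct and is essentially the paper's argument: Hoeffding at each level plus a union bound over at most $K+1$ levels. The only difference is that you control just the levels $k\le k_\rho^\star$, one-sidedly, whereas the paper works on the uniform event $\max_k|\widehat Q_k-Q_k|<\Delta$ over all levels.

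That difference does matter for your binary-search remark. Your restricted $\mathcal E$ leaves the levels above $k_\rho^\star$ uncontrolled, so it does not make the empirical feasibility indicator a step function, and binary search can still go wrong. That claim needs the paper's full uniform event, which costs the same $2(K+1)e^{-2m\Delta^2}$.

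Separately, Assumption~\ref{assump:monotone_guidance} does not by itself make the feasible set nonempty. The existence of $k_\rho^\star$ is presupposed by the margin condition.
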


\begin{proof}
    By Hoeffding's inequality and a union bound over $K+1$ levels,
    \begin{equation}
        \mathbb{P}\!\left(\max_k|\widehat Q_k-Q_k|\ge\Delta\right)
        \le
        2(K+1)\exp(-2m\Delta^2)
        \le \xi .
    \end{equation}
    On the complementary event, every $k<k_\rho^\star$ has $\widehat Q_k<\rho$, while every $k\ge k_\rho^\star$ has $\widehat Q_k\ge\rho$.
    Therefore the empirical minimal feasible level equals $k_\rho^\star$.
\end{proof}

%% file: sec/case_study.tex
\section{Training Cases of \textsc{ActGuide-RL}}
\label{sec:case study}

We present representative training cases to illustrate how \textsc{ActGuide-RL} injects action data as plan-style guidance.
Each case uses the full guidance level $g_K$, where the prompt contains the complete available reference action trajectory while still requiring the policy to reason and complete the task with tools.

\begin{actguidecasebox}{Training Case 1 of \textsc{ActGuide-RL} with Full Guidance $g_K$}
    \label{case:case study 1}
    \actguidecaseheader{Example from RL Training Data}

    \noindent\textbf{Original Question}: Which political party last held overall control of the West Berkshire Council before the election that first introduced the ward named after the Berkshire village where the Kennet \& Avon Canal route was restored to Hungerford Wharf in July 1974, resulting in a tied outcome necessitating by-elections and enabling the Conservatives to secure a majority?

    \noindent\textbf{Ground Truth}: The Liberal Democrats

    \actguidecaseheader{Prompt for \textsc{ActGuide-RL}}

    \noindent Answer the given question using the given tools.
    For each step, you must conduct a thought section to reason before calling any tools.

    \noindent Question: Which political party last held overall control of the West Berkshire Council before the election that first introduced the ward named after the Berkshire village where the Kennet \& Avon Canal route was restored to Hungerford Wharf in July 1974, resulting in a tied outcome necessitating by-elections and enabling the Conservatives to secure a majority?

    \noindent Follow the partial action trajectory hint to take actions, note that the trajectory may not complete and you still need do some extra tool calls to finish the task.

    \noindent Reference action trajectory hint:

    \noindent \textless 1\textgreater: search [``Kennet \& Avon Canal restored to Hungerford Wharf July 1974'', ``Hungerford Wharf July 1974 canal restoration'', ``Kennet and Avon canal restoration July 1974'', ``Berkshire village Hungerford Wharf restored July 1974'']
    \noindent \textless 2\textgreater: search [``West Berkshire Council election tied result by-elections Conservatives secured majority new ward named after Hungerford'', ``West Berkshire Council election tied outcome by-elections Conservative majority ward introduction'', ``West Berkshire Council ward first introduced election tied by-elections'']
    \noindent \textless 3\textgreater: visit [\url{https://en.wikipedia.org/wiki/West_Berkshire_Council_elections}]
    \noindent \textless 4\textgreater: search [``Hungerford ward West Berkshire Council'', ``Hungerford electoral ward West Berkshire'', ``Hungerford West Berkshire Council ward'']
    \noindent \textless 5\textgreater: search [``tied West Berkshire council election by-election Conservative majority'', ``tied West Berkshire by-election'', ``by-elections West Berkshire Conservative majority'']
    \noindent \textless 6\textgreater: search [``West Berkshire Council tied election'', ``West Berkshire tied election'', ``West Berkshire tied council'']
    \noindent \textless 7\textgreater: search [``tied result West Berkshire council by-election'', ``tied by-elections West Berkshire'', ``tie West Berkshire council election'']
    \noindent \textless 8\textgreater: search [``West Berkshire 2003 new ward'', ``West Berkshire 2003 ward boundary'', ``West Berkshire 2003 electoral changes'']
    \noindent \textless 9\textgreater: visit [\url{https://grokipedia.com/page/2003_west_berkshire_council_election}]
    \noindent \textless 10\textgreater: search [``Hungerford ward West Berkshire 2003'', ``Hungerford West Berkshire new ward 2003'']
    \noindent \textless 11\textgreater: visit [\url{https://www.andrewteale.me.uk/leap/ward/2226/}]
    \noindent \textless 12\textgreater: visit [\url{https://en.wikipedia.org/wiki/West_Berkshire_Council_elections}]
\end{actguidecasebox}

\begin{actguidecasebox}{Training Case 2 of \textsc{ActGuide-RL} with Full Guidance $g_K$}
    \label{case:case study 2}
    \actguidecaseheader{Example from RL Training Data}

    \noindent\textbf{Original Question}: In a chess tournament, all players faced each other twice and scored 1 point for a win, half a point for a draw, and zero points for a loss. The winner was the one who scored the most points. Lu\'iza, a curious mathematician, found a paper stating that the sum of the scores of all participants was 210 points. a) How many players participated in the tournament? b) Lu\'iza noticed that the tournament logo depicted a knight in the center, with black squares illustrating the knight's possible moves. She wondered: If the board were infinite and the knight continued its pattern of movement, onto how many different squares could the knight move in two moves? Initially, she thought ``64'' but quickly realized she was wrong. Now, it's your turn to correctly answer the question Lu\'iza thought of. c) Lu\'iza received a message from a friend who participated in the tournament, informing her that he scored 12 points. Lu\'iza replied: ``- Don't be sad, you will win the next championship!'' How did Lu\'iza know that her friend did not win the tournament?

    \noindent\textbf{Ground Truth}: 33

    \actguidecaseheader{Prompt for \textsc{ActGuide-RL}}

    \noindent Answer the given question using the given tools.
    For each step, you must conduct a thought section to reason before calling any tools.

    \noindent Question: In a chess tournament, all players faced each other twice and scored 1 point for a win, half a point for a draw, and zero points for a loss. The winner was the one who scored the most points. Lu\'iza, a curious mathematician, found a paper stating that the sum of the scores of all participants was 210 points. a) How many players participated in the tournament? b) Lu\'iza noticed that the tournament logo depicted a knight in the center, with black squares illustrating the knight's possible moves. She wondered: If the board were infinite and the knight continued its pattern of movement, onto how many different squares could the knight move in two moves? Initially, she thought ``64'' but quickly realized she was wrong. Now, it's your turn to correctly answer the question Lu\'iza thought of. c) Lu\'iza received a message from a friend who participated in the tournament, informing her that he scored 12 points. Lu\'iza replied: ``- Don't be sad, you will win the next championship!'' How did Lu\'iza know that her friend did not win the tournament?

    \noindent Follow the partial action trajectory hint to take actions, note that the trajectory may not complete and you still need do some extra tool calls to finish the task.

    \noindent Reference action trajectory hint:

    \noindent \textless 1\textgreater: search [``knight moves two moves reachable squares infinite board 33'']
    \noindent \textless 2\textgreater: search [``knight two moves reachable squares infinite board'']
    \noindent \textless 3\textgreater: visit [\url{https://www.reddit.com/r/chess/comments/bem4zw/given_an_infinite_chess_board_how_many_spaces/}]
    \noindent \textless 4\textgreater: search [``how many squares can a knight reach in two moves'']
    \noindent \textless 5\textgreater: search [``knight reach two moves how many squares'']
    \noindent \textless 6\textgreater: visit [\url{https://file.scirp.org/Html/8-1200157_34513.htm}]
    \noindent \textless 7\textgreater: search [``33 squares knight two moves'']
    \noindent \textless 8\textgreater: visit [\url{https://www.mathworks.com/matlabcentral/cody/problems/60541-compute-the-number-of-squares-a-knight-can-reach-after-n-moves}]
    \noindent \textless 9\textgreater: visit [\url{https://www.mathworks.com/content/dam/mathworks/images/cody/graphics/ChessKnight.svg}]
    \noindent \textless 10\textgreater: search [``33 squares knight two moves infinite board'']
    \noindent \textless 11\textgreater: search [``33 squares after two moves knight'']
    \noindent \textless 12\textgreater: search [``knight can reach after two moves 33'']
    \noindent \textless 13\textgreater: visit [\url{https://www.mathworks.com/matlabcentral/cody/problems/60541-compute-the-number-of-squares-a-knight-can-reach-after-n-moves}]
    \noindent \textless 14\textgreater: search [``knight two moves 33 squares'']
    \noindent \textless 15\textgreater: visit [\url{https://www.reddit.com/r/chess/comments/1j3roeo/the_amount_of_space_a_single_knight_can_control/}]
\end{actguidecasebox}

\begin{actguidecasebox}{Training Case 3 of \textsc{ActGuide-RL} with Full Guidance $g_K$}
    \label{case:case study 3}
    \actguidecaseheader{Example from RL Training Data}

    \noindent\textbf{Original Question}: In which year, during the early 1970s, did a player, known for his time with the New York Yankees, join the team, and who is the former MLB pitcher, now a pitching coach for a minor league team affiliated with the Miami Marlins, whose first name starts with `M' and who started the opening game of the 2009 World Baseball Classic against Venezuela, pitching 4 shutout innings?

    \noindent\textbf{Ground Truth}: 1972, Mark DiFelice

    \actguidecaseheader{Prompt for \textsc{ActGuide-RL}}

    \noindent Answer the given question using the given tools.
    For each step, you must conduct a thought section to reason before calling any tools.

    \noindent Question: In which year, during the early 1970s, did a player, known for his time with the New York Yankees, join the team, and who is the former MLB pitcher, now a pitching coach for a minor league team affiliated with the Miami Marlins, whose first name starts with `M' and who started the opening game of the 2009 World Baseball Classic against Venezuela, pitching 4 shutout innings?

    \noindent Follow the partial action trajectory hint to take actions, note that the trajectory may not complete and you still need do some extra tool calls to finish the task.

    \noindent Reference action trajectory hint:

    \noindent \textless 1\textgreater: search [``2009 World Baseball Classic opening game Venezuela 4 shutout innings pitcher'', ``Venezuela 2009 WBC opening game pitcher 4 shutout innings'', ``World Baseball Classic 2009 Venezuela opener pitcher'', ``2009 WBC Venezuela vs Opening Game pitcher'', ``2009 WBC Venezuela 4 shutout innings start'']
    \noindent \textless 2\textgreater: search [``2009 World Baseball Classic opening game March 5 2009 opening games schedule'', ``first game of 2009 World Baseball Classic opening day'']
    \noindent \textless 3\textgreater: visit [\url{https://www.aol.com/articles/miami-opening-game-world-baseball-183425977.html}]
    \noindent \textless 4\textgreater: search [``2009 World Baseball Classic Venezuela vs Italy opening game 4 shutout innings pitcher'', ``Venezuela vs Italy March 7 2009 WBC pitcher started 4 shutout innings'']
    \noindent \textless 5\textgreater: visit [\url{https://www.arkansasonline.com/news/2009/mar/11/world-baseball-classic-venezuelas-power-too-much-for-italy/}]
    \noindent \textless 6\textgreater: visit [\url{https://www.espn.com/world-baseball-classic/boxscore/_/gameId/290307116}]
    \noindent \textless 7\textgreater: search [``Mark DiFelice pitching coach Miami Marlins affiliate'', ``Mark DiFelice minor league pitching coach Marlins'']
    \noindent \textless 8\textgreater: search [``joined the New York Yankees in 1970'', ``joined the New York Yankees in 1971'', ``joined the New York Yankees in 1972'', ``joined the New York Yankees in 1973'', ``joined the New York Yankees in 1974'', ``joined the New York Yankees in 1975'']
    \noindent \textless 9\textgreater: search [``joined the New York Yankees 1970'', ``joined the New York Yankees 1971'', ``joined the New York Yankees 1972'', ``joined the New York Yankees 1973'', ``joined the New York Yankees 1974'', ``joined the New York Yankees 1975'']
    \noindent \textless 10\textgreater: search [``Sparky Lyle joined the New York Yankees 1972'']
    \noindent \textless 11\textgreater: search [``Graig Nettles joined the New York Yankees 1972'']
    \noindent \textless 12\textgreater: search [``known for his time with the New York Yankees'']
    \noindent \textless 13\textgreater: search [``Thurman Munson joined the New York Yankees'']
    \noindent \textless 14\textgreater: visit [\url{https://pinstripesnation.com/thurman-munson-the-captain-who-never-left-the-yankees-2023-08-02/}]
    \noindent \textless 15\textgreater: search [``Graig Nettles known for his time with the New York Yankees'']
    \noindent \textless 16\textgreater: search [``Mark DiFelice 2009 World Baseball Classic opening game Venezuela 4 shutout innings'']
    \noindent \textless 17\textgreater: visit [\url{https://en.wikipedia.org/wiki/Mark_DiFelice}]
    \noindent \textless 18\textgreater: visit [\url{https://www.ebay.com/itm/389131297484}]
    \noindent \textless 19\textgreater: visit [\url{https://en.wikipedia.org/wiki/Graig_Nettles}]
    \noindent \textless 20\textgreater: visit [\url{https://en.wikipedia.org/wiki/Mark_DiFelice}]
    \noindent \textless 21\textgreater: search [``Mark DiFelice Marlins'', ``Mark DiFelice Miami Marlins'']
    \noindent \textless 22\textgreater: visit [\url{https://en.wikipedia.org/wiki/Graig_Nettles}]
    \noindent \textless 23\textgreater: search [``Sparky Lyle New York Yankees known for his time with'']
    \noindent \textless 24\textgreater: visit [\url{https://en.wikipedia.org/wiki/Sparky_Lyle}]
    \noindent \textless 25\textgreater: search [``best known for his time with the New York Yankees Sparky Lyle'']
    \noindent \textless 26\textgreater: search [``best known for his time with the New York Yankees Graig Nettles'']
\end{actguidecasebox}

%% file: sec/limitation.tex
\section{Limitations}
\label{sec:limitations}

Due to the relatively simple experimental setup, the ease of obtaining task queries with different difficulty levels, and the natural availability of action data, our main experiments are conducted in the search-agent setting.
This setting provides a controlled testbed for studying reachability barriers and guidance-induced off-policy risk.
Nevertheless, \textsc{ActGuide-RL} is designed for general agentic training rather than being specific to search agents, and its effectiveness in other agent tasks, such as CLI, GUI, API-based, and embodied environments, remains to be further explored.

This work utilizes action data through plan-style guidance, where reference actions are injected as a high-level action plan to help the policy cross exploration barriers.
This simple formulation keeps the method lightweight, broadly applicable, and independent of costly reasoning traces.
More fine-grained ways of using action data, such as step-level guidance injection, also remain to be further explored.

This work focuses on how to leverage action data for agentic RL, but does not discuss how such data should be systematically collected and processed.
In practice, structured collection, cleaning, and filtering of existing interaction records, such as backend logs from different agent applications, are also important for action-data-based training and remain worth exploring.

%% file: ref.bib
@misc{wang_ragen_2025,
	title = {{RAGEN}: {Understanding} {Self}-{Evolution} in {LLM} {Agents} via {Multi}-{Turn} {Reinforcement} {Learning}},
	shorttitle = {{RAGEN}},
	url = {http://arxiv.org/abs/2504.20073},
	doi = {10.48550/arXiv.2504.20073},
	urldate = {2025-05-27},
	publisher = {arXiv},
	author = {Wang, Zihan and Wang, Kangrui and Wang, Qineng and Zhang, Pingyue and Li, Linjie and Yang, Zhengyuan and Jin, Xing and Yu, Kefan and Nguyen, Minh Nhat and Liu, Licheng and Gottlieb, Eli and Lu, Yiping and Cho, Kyunghyun and Wu, Jiajun and Fei-Fei, Li and Wang, Lijuan and Choi, Yejin and Li, Manling},
	month = may,
	year = {2025},
	note = {arXiv:2504.20073 [cs]},
}

@misc{jin_search-r1_2025,
	title = {Search-{R1}: {Training} {LLMs} to {Reason} and {Leverage} {Search} {Engines} with {Reinforcement} {Learning}},
	shorttitle = {Search-{R1}},
	url = {http://arxiv.org/abs/2503.09516},
	doi = {10.48550/arXiv.2503.09516},
	urldate = {2025-05-27},
	publisher = {arXiv},
	author = {Jin, Bowen and Zeng, Hansi and Yue, Zhenrui and Yoon, Jinsung and Arik, Sercan and Wang, Dong and Zamani, Hamed and Han, Jiawei},
	month = apr,
	year = {2025},
	note = {arXiv:2503.09516 [cs]},
}

@misc{yue_does_2025,
	title = {Does {Reinforcement} {Learning} {Really} {Incentivize} {Reasoning} {Capacity} in {LLMs} {Beyond} the {Base} {Model}?},
	url = {http://arxiv.org/abs/2504.13837},
	doi = {10.48550/arXiv.2504.13837},
	urldate = {2025-05-28},
	publisher = {arXiv},
	author = {Yue, Yang and Chen, Zhiqi and Lu, Rui and Zhao, Andrew and Wang, Zhaokai and Yue, Yang and Song, Shiji and Huang, Gao},
	month = may,
	year = {2025},
	note = {arXiv:2504.13837 [cs]},
}

@misc{shao_deepseekmath_2024,
	title = {{DeepSeekMath}: {Pushing} the {Limits} of {Mathematical} {Reasoning} in {Open} {Language} {Models}},
	shorttitle = {{DeepSeekMath}},
	url = {http://arxiv.org/abs/2402.03300},
	doi = {10.48550/arXiv.2402.03300},
	language = {en-US},
	urldate = {2025-05-28},
	publisher = {arXiv},
	author = {Shao, Zhihong and Wang, Peiyi and Zhu, Qihao and Xu, Runxin and Song, Junxiao and Bi, Xiao and Zhang, Haowei and Zhang, Mingchuan and Li, Y. K. and Wu, Y. and Guo, Daya},
	month = apr,
	year = {2024},
	note = {arXiv:2402.03300 [cs]},
}

@misc{feng_group--group_2025,
	title = {Group-in-{Group} {Policy} {Optimization} for {LLM} {Agent} {Training}},
	url = {http://arxiv.org/abs/2505.10978},
	doi = {10.48550/arXiv.2505.10978},
	urldate = {2025-05-28},
	publisher = {arXiv},
	author = {Feng, Lang and Xue, Zhenghai and Liu, Tingcong and An, Bo},
	month = may,
	year = {2025},
	note = {arXiv:2505.10978 [cs]},
}

@misc{li_webthinker_2025,
	title = {{WebThinker}: {Empowering} {Large} {Reasoning} {Models} with {Deep} {Research} {Capability}},
	shorttitle = {{WebThinker}},
	url = {http://arxiv.org/abs/2504.21776},
	doi = {10.48550/arXiv.2504.21776},
	urldate = {2025-06-04},
	publisher = {arXiv},
	author = {Li, Xiaoxi and Jin, Jiajie and Dong, Guanting and Qian, Hongjin and Zhu, Yutao and Wu, Yongkang and Wen, Ji-Rong and Dou, Zhicheng},
	month = apr,
	year = {2025},
	note = {arXiv:2504.21776 [cs]},
}

@misc{yu_dapo_2025,
	title = {{DAPO}: {An} {Open}-{Source} {LLM} {Reinforcement} {Learning} {System} at {Scale}},
	shorttitle = {{DAPO}},
	url = {http://arxiv.org/abs/2503.14476},
	doi = {10.48550/arXiv.2503.14476},
	urldate = {2025-07-24},
	publisher = {arXiv},
	author = {Yu, Qiying and Zhang, Zheng and Zhu, Ruofei and Yuan, Yufeng and Zuo, Xiaochen and Yue, Yu and Dai, Weinan and Fan, Tiantian and Liu, Gaohong and Liu, Lingjun and Liu, Xin and Lin, Haibin and Lin, Zhiqi and Ma, Bole and Sheng, Guangming and Tong, Yuxuan and Zhang, Chi and Zhang, Mofan and Zhang, Wang and Zhu, Hang and Zhu, Jinhua and Chen, Jiaze and Chen, Jiangjie and Wang, Chengyi and Yu, Hongli and Song, Yuxuan and Wei, Xiangpeng and Zhou, Hao and Liu, Jingjing and Ma, Wei-Ying and Zhang, Ya-Qin and Yan, Lin and Qiao, Mu and Wu, Yonghui and Wang, Mingxuan},
	month = may,
	year = {2025},
	note = {arXiv:2503.14476 [cs]},
}

@misc{dong_agentic_2025,
	title = {Agentic {Reinforced} {Policy} {Optimization}},
	url = {http://arxiv.org/abs/2507.19849},
	doi = {10.48550/arXiv.2507.19849},
	urldate = {2025-07-29},
	publisher = {arXiv},
	author = {Dong, Guanting and Mao, Hangyu and Ma, Kai and Bao, Licheng and Chen, Yifei and Wang, Zhongyuan and Chen, Zhongxia and Du, Jiazhen and Wang, Huiyang and Zhang, Fuzheng and Zhou, Guorui and Zhu, Yutao and Wen, Ji-Rong and Dou, Zhicheng},
	month = jul,
	year = {2025},
	note = {arXiv:2507.19849 [cs]},
}

@misc{li_websailor_2025,
	title = {{WebSailor}: {Navigating} {Super}-human {Reasoning} for {Web} {Agent}},
	shorttitle = {{WebSailor}},
	url = {http://arxiv.org/abs/2507.02592},
	doi = {10.48550/arXiv.2507.02592},
	urldate = {2025-07-31},
	publisher = {arXiv},
	author = {Li, Kuan and Zhang, Zhongwang and Yin, Huifeng and Zhang, Liwen and Ou, Litu and Wu, Jialong and Yin, Wenbiao and Li, Baixuan and Tao, Zhengwei and Wang, Xinyu and Shen, Weizhou and Zhang, Junkai and Zhang, Dingchu and Wu, Xixi and Jiang, Yong and Yan, Ming and Xie, Pengjun and Huang, Fei and Zhou, Jingren},
	month = jul,
	year = {2025},
	note = {arXiv:2507.02592 [cs]},
}

@inproceedings{
chu2026gpg,
title={{GPG}: A Simple and Strong Reinforcement Learning Baseline for Model Reasoning},
author={Xiangxiang Chu and Hailang Huang and Xiao Zhang and Fei Wei and Yong Wang},
booktitle={The Fourteenth International Conference on Learning Representations},
year={2026},
url={https://openreview.net/forum?id=inccdtfx8x}
}

@misc{mialon_gaia_2023,
	title = {{GAIA}: a benchmark for {General} {AI} {Assistants}},
	shorttitle = {{GAIA}},
	url = {http://arxiv.org/abs/2311.12983},
	doi = {10.48550/arXiv.2311.12983},
	urldate = {2025-08-23},
	publisher = {arXiv},
	author = {Mialon, Grégoire and Fourrier, Clémentine and Swift, Craig and Wolf, Thomas and LeCun, Yann and Scialom, Thomas},
	month = nov,
	year = {2023},
	note = {arXiv:2311.12983 [cs]},
}

@misc{wu_webwalker_2025,
	title = {{WebWalker}: {Benchmarking} {LLMs} in {Web} {Traversal}},
	shorttitle = {{WebWalker}},
	url = {http://arxiv.org/abs/2501.07572},
	doi = {10.48550/arXiv.2501.07572},
	urldate = {2025-08-23},
	publisher = {arXiv},
	author = {Wu, Jialong and Yin, Wenbiao and Jiang, Yong and Wang, Zhenglin and Xi, Zekun and Fang, Runnan and Zhang, Linhai and He, Yulan and Zhou, Deyu and Xie, Pengjun and Huang, Fei},
	month = aug,
	year = {2025},
	note = {arXiv:2501.07572 [cs]},
}

@misc{chen_xbench_2025,
	title = {xbench: {Tracking} {Agents} {Productivity} {Scaling} with {Profession}-{Aligned} {Real}-{World} {Evaluations}},
	shorttitle = {xbench},
	url = {http://arxiv.org/abs/2506.13651},
	doi = {10.48550/arXiv.2506.13651},
	urldate = {2025-08-23},
	publisher = {arXiv},
	author = {Chen, Kaiyuan and Ren, Yixin and Liu, Yang and Hu, Xiaobo and Tian, Haotong and Xie, Tianbao and Liu, Fangfu and Zhang, Haoye and Liu, Hongzhang and Gong, Yuan and Sun, Chen and Hou, Han and Yang, Hui and Pan, James and Lou, Jianan and Mao, Jiayi and Liu, Jizheng and Li, Jinpeng and Liu, Kangyi and Liu, Kenkun and Wang, Rui and Li, Run and Niu, Tong and Zhang, Wenlong and Yan, Wenqi and Wang, Xuanzheng and Zhang, Yuchen and Hung, Yi-Hsin and Jiang, Yuan and Liu, Zexuan and Yin, Zihan and Ma, Zijian and Mo, Zhiwen},
	month = jun,
	year = {2025},
	note = {arXiv:2506.13651 [cs]},
}

@misc{yan_learning_2025,
	title = {Learning to {Reason} under {Off}-{Policy} {Guidance}},
	url = {http://arxiv.org/abs/2504.14945},
	doi = {10.48550/arXiv.2504.14945},
	language = {en-US},
	urldate = {2026-03-03},
	publisher = {arXiv},
	author = {Yan, Jianhao and Li, Yafu and Hu, Zican and Wang, Zhi and Cui, Ganqu and Qu, Xiaoye and Cheng, Yu and Zhang, Yue},
	month = jun,
	year = {2025},
	note = {arXiv:2504.14945 [cs]},
}

@misc{shenfeld_self-distillation_2026,
	title = {Self-{Distillation} {Enables} {Continual} {Learning}},
	url = {http://arxiv.org/abs/2601.19897},
	doi = {10.48550/arXiv.2601.19897},
	urldate = {2026-03-04},
	publisher = {arXiv},
	author = {Shenfeld, Idan and Damani, Mehul and Hübotter, Jonas and Agrawal, Pulkit},
	month = jan,
	year = {2026},
	note = {arXiv:2601.19897 [cs]},
}

@article{team2026kimi,
  title={Kimi K2. 5: Visual Agentic Intelligence},
  author={Team, Kimi and Bai, Tongtong and Bai, Yifan and Bao, Yiping and Cai, SH and Cao, Yuan and Charles, Y and Che, HS and Chen, Cheng and Chen, Guanduo and others},
  journal={arXiv preprint arXiv:2602.02276},
  year={2026}
}

@article{wei2026agentic,
  title={Agentic reasoning for large language models},
  author={Wei, Tianxin and Li, Ting-Wei and Liu, Zhining and Ning, Xuying and Yang, Ze and Zou, Jiaru and Zeng, Zhichen and Qiu, Ruizhong and Lin, Xiao and Fu, Dongqi and others},
  journal={arXiv preprint arXiv:2601.12538},
  year={2026}
}

@article{luo2025large,
  title={Large language model agent: A survey on methodology, applications and challenges},
  author={Luo, Junyu and Zhang, Weizhi and Yuan, Ye and Zhao, Yusheng and Yang, Junwei and Gu, Yiyang and Wu, Bohan and Chen, Binqi and Qiao, Ziyue and Long, Qingqing and others},
  journal={arXiv preprint arXiv:2503.21460},
  year={2025}
}

@inproceedings{yao2022react,
  title={React: Synergizing reasoning and acting in language models},
  author={Yao, Shunyu and Zhao, Jeffrey and Yu, Dian and Du, Nan and Shafran, Izhak and Narasimhan, Karthik R and Cao, Yuan},
  booktitle={The eleventh international conference on learning representations},
  year={2022}
}

@article{wang2026openclaw,
  title={Openclaw-rl: Train any agent simply by talking},
  author={Wang, Yinjie and Chen, Xuyang and Jin, Xiaolong and Wang, Mengdi and Yang, Ling},
  journal={arXiv preprint arXiv:2603.10165},
  year={2026}
}

@article{yao2024tau,
  title={{tau bench: A Benchmark for Tool-Agent-User Interaction in Real-World Domains}},
  author={Yao, Shunyu and Shinn, Noah and Razavi, Pedram and Narasimhan, Karthik},
  journal={arXiv preprint arXiv:2406.12045},
  year={2024}
}

@misc{qwen35blog,
    title = {Qwen3.5: Accelerating Productivity with Native Multimodal Agents},
    url = {https://qwen.ai/blog?id=qwen3.5},
    author = {Qwen Team},
    month = {February},
    year = {2026}
}

@online{openai_gpt_5_4,
  author = {{OpenAI}},
  title = {GPT-5.4 Thinking System Card},
  year = {2025},
  url = {https://openai.com/index/gpt-5-4-thinking-system-card/},
  urldate = {2026-04-10},
  organization = {OpenAI Deployment Safety}
}

@misc{anthropic_opus_4_6,
  author = {{Anthropic}},
  title = {{Claude Opus 4.6} Model Card},
  year = {2026},
  url = {https://www-cdn.anthropic.com/bf10f64990cfda0ba858290be7b8cc6317685f47.pdf}
}

@article{barres2025tau,
  title={{tau2 Bench: Evaluating Conversational Agents in a Dual-Control Environment}},
  author={Barres, Victor and Dong, Honghua and Ray, Soham and Si, Xujie and Narasimhan, Karthik},
  journal={arXiv preprint arXiv:2506.07982},
  year={2025}
}

@article{dong2025tool,
  title={Tool-star: Empowering llm-brained multi-tool reasoner via reinforcement learning},
  author={Dong, Guanting and Chen, Yifei and Li, Xiaoxi and Jin, Jiajie and Qian, Hongjin and Zhu, Yutao and Mao, Hangyu and Zhou, Guorui and Dou, Zhicheng and Wen, Ji-Rong},
  journal={arXiv preprint arXiv:2505.16410},
  year={2025}
}

@article{qin2025ui,
  title={Ui-tars: Pioneering automated gui interaction with native agents},
  author={Qin, Yujia and Ye, Yining and Fang, Junjie and Wang, Haoming and Liang, Shihao and Tian, Shizuo and Zhang, Junda and Li, Jiahao and Li, Yunxin and Huang, Shijue and others},
  journal={arXiv preprint arXiv:2501.12326},
  year={2025}
}

@misc{OSWorld,
      title={OSWorld: Benchmarking Multimodal Agents for Open-Ended Tasks in Real Computer Environments}, 
      author={Tianbao Xie and Danyang Zhang and Jixuan Chen and Xiaochuan Li and Siheng Zhao and Ruisheng Cao and Toh Jing Hua and Zhoujun Cheng and Dongchan Shin and Fangyu Lei and Yitao Liu and Yiheng Xu and Shuyan Zhou and Silvio Savarese and Caiming Xiong and Victor Zhong and Tao Yu},
      year={2024},
      eprint={2404.07972},
      archivePrefix={arXiv},
      primaryClass={cs.AI}
}

@article{jimenez2023swe,
  title={Swe-bench: Can language models resolve real-world github issues?},
  author={Jimenez, Carlos E and Yang, John and Wettig, Alexander and Yao, Shunyu and Pei, Kexin and Press, Ofir and Narasimhan, Karthik},
  journal={arXiv preprint arXiv:2310.06770},
  year={2023}
}

@misc{wildclawbench,
  author       = {Shuangrui Ding and Xuanlang Dai and Long Xing and Shengyuan Ding and Ziyu Liu and Jingyi Yang and Penghui Yang and Zhixiong Zhang and Xilin Wei and Yubo Ma and Haodong Duan and Jing Shao and Jiaqi Wang and Dahua Lin and Kai Chen and Yuhang Zang},
  title        = {WildClawBench},
  howpublished = {https://github.com/InternLM/WildClawBench},
  note         = {GitHub repository},
  year         = {2026}
}

@article{zhang2025landscape,
  title={The landscape of agentic reinforcement learning for llms: A survey},
  author={Zhang, Guibin and Geng, Hejia and Yu, Xiaohang and Yin, Zhenfei and Zhang, Zaibin and Tan, Zelin and Zhou, Heng and Li, Zhongzhi and Xue, Xiangyuan and Li, Yijiang and others},
  journal={arXiv preprint arXiv:2509.02547},
  year={2025}
}

@article{yi2026pivotrl,
  title={PivotRL: High Accuracy Agentic Post-Training at Low Compute Cost},
  author={Yi, Junkeun and Mosk-Aoyama, Damon and Huang, Baihe and Gala, Ritu and Wang, Charles and Devare, Sugam Dipak and Bhardwaj, Khushi and Gupta, Abhibha and Kuchaiev, Oleksii and Jiao, Jiantao and others},
  journal={arXiv preprint arXiv:2603.21383},
  year={2026}
}

@article{wu2026learn,
  title={Learn Hard Problems During RL with Reference Guided Fine-tuning},
  author={Wu, Yangzhen and Li, Shanda and Wen, Zixin and Zhou, Xin and Talwalkar, Ameet and Yang, Yiming and Huang, Wenhao and Cai, Tianle},
  journal={arXiv preprint arXiv:2603.01223},
  year={2026}
}

@inproceedings{huang2025boosting,
  title={Boosting mllm reasoning with text-debiased hint-grpo},
  author={Huang, Qihan and Dai, Weilong and Liu, Jinlong and He, Wanggui and Jiang, Hao and Song, Mingli and Chen, Jingyuan and Yao, Chang and Song, Jie},
  booktitle={Proceedings of the IEEE/CVF International Conference on Computer Vision},
  pages={4848--4857},
  year={2025}
}

@inproceedings{
dai2026harder,
title={Harder Is Better: Boosting Mathematical Reasoning via Difficulty-Aware {GRPO} and Multi-Aspect Question Reformulation},
author={Yanqi Dai and Yuxiang Ji and Xiao Zhang and Yong Wang and Xiangxiang Chu and Zhiwu Lu},
booktitle={The Fourteenth International Conference on Learning Representations},
year={2026},
url={https://openreview.net/forum?id=nfURupkdRJ}
}

@article{erdogan2025plan,
  title={Plan-and-act: Improving planning of agents for long-horizon tasks},
  author={Erdogan, Lutfi Eren and Lee, Nicholas and Kim, Sehoon and Moon, Suhong and Furuta, Hiroki and Anumanchipalli, Gopala and Keutzer, Kurt and Gholami, Amir},
  journal={arXiv preprint arXiv:2503.09572},
  year={2025}
}

@article{yang2026gui,
  title={GUI-Libra: Training Native GUI Agents to Reason and Act with Action-aware Supervision and Partially Verifiable RL},
  author={Yang, Rui and Wu, Qianhui and Wang, Zhaoyang and Chen, Hanyang and Yang, Ke and Cheng, Hao and Yao, Huaxiu and Peng, Baoling and Zhang, Huan and Gao, Jianfeng and others},
  journal={arXiv preprint arXiv:2602.22190},
  year={2026}
}

@article{deng2023mind2web,
  title={Mind2web: Towards a generalist agent for the web},
  author={Deng, Xiang and Gu, Yu and Zheng, Boyuan and Chen, Shijie and Stevens, Sam and Wang, Boshi and Sun, Huan and Su, Yu},
  journal={Advances in Neural Information Processing Systems},
  volume={36},
  pages={28091--28114},
  year={2023}
}

@inproceedings{caccia2024fine,
  title={Fine-Tuning Web Agents: It Works, But It's Trickier Than You Think},
  author={Caccia, Massimo and Thakkar, Megh and Boisvert, L{\'e}o and De Chezelles, Thibault Le Sellier and Pich{\'e}, Alexandre and Chapados, Nicolas and Drouin, Alexandre and Gasse, Maxime and Lacoste, Alexandre},
  booktitle={NeurIPS 2024 Workshop on Open-World Agents},
  year={2024}
}

@article{schulman2017proximal,
  title={Proximal policy optimization algorithms},
  author={Schulman, John and Wolski, Filip and Dhariwal, Prafulla and Radford, Alec and Klimov, Oleg},
  journal={arXiv preprint arXiv:1707.06347},
  year={2017}
}

@article{yu2025medresearcher,
  title={Medresearcher-r1: Expert-level medical deep researcher via a knowledge-informed trajectory synthesis framework},
  author={Yu, Ailing and Yao, Lan and Liu, Jingnan and Chen, Zhe and Yin, Jiajun and Wang, Yuan and Liao, Xinhao and Ye, Zhiling and Li, Ji and Yue, Yun and others},
  journal={arXiv preprint arXiv:2508.14880},
  year={2025}
}

@article{team2025tongyi,
  title={Tongyi deepresearch technical report},
  author={Team, Tongyi DeepResearch and Li, Baixuan and Zhang, Bo and Zhang, Dingchu and Huang, Fei and Li, Guangyu and Chen, Guoxin and Yin, Huifeng and Wu, Jialong and Zhou, Jingren and others},
  journal={arXiv preprint arXiv:2510.24701},
  year={2025}
}

@article{chu2026redsearcher,
  title={Redsearcher: A scalable and cost-efficient framework for long-horizon search agents},
  author={Chu, Zheng and Wang, Xiao and Hong, Jack and Fan, Huiming and Huang, Yuqi and Yang, Yue and Xu, Guohai and Zhao, Chenxiao and Xiang, Cheng and Hu, Shengchao and others},
  journal={arXiv preprint arXiv:2602.14234},
  year={2026}
}

@article{deng2025openvlthinker,
  title={Openvlthinker: Complex vision-language reasoning via iterative sft-rl cycles},
  author={Deng, Yihe and Bansal, Hritik and Yin, Fan and Peng, Nanyun and Wang, Wei and Chang, Kai-Wei},
  journal={arXiv preprint arXiv:2503.17352},
  year={2025}
}

@article{chen2025beyond,
  title={Beyond two-stage training: Cooperative sft and rl for llm reasoning},
  author={Chen, Liang and Han, Xueting and Shen, Li and Bai, Jing and Wong, Kam-Fai},
  journal={arXiv preprint arXiv:2509.06948},
  year={2025}
}

@article{chen2025sft,
  title={Sft or rl? an early investigation into training r1-like reasoning large vision-language models},
  author={Chen, Hardy and Tu, Haoqin and Wang, Fali and Liu, Hui and Tang, Xianfeng and Du, Xinya and Zhou, Yuyin and Xie, Cihang},
  journal={arXiv preprint arXiv:2504.11468},
  year={2025}
}

@article{pang2024iterative,
  title={Iterative reasoning preference optimization},
  author={Pang, Richard Y and Yuan, Weizhe and Cho, Kyunghyun and He, He and Sukhbaatar, Sainbayar and Weston, Jason},
  journal={Advances in Neural Information Processing Systems},
  volume={37},
  pages={116617--116637},
  year={2024}
}

@article{yao2026coba,
  title={CoBA-RL: Capability-Oriented Budget Allocation for Reinforcement Learning in LLMs},
  author={Yao, Zhiyuan and Zhang, Yi-Kai and Chen, Yuxin and Sun, Yueqing and Xu, Zishan and Yang, Yu and Hu, Tianhao and Gu, Qi and Su, Hui and Cai, Xunliang},
  journal={arXiv preprint arXiv:2602.03048},
  year={2026}
}

@inproceedings{li2026adacurl,
  title={Adacurl: Adaptive curriculum reinforcement learning with invalid sample mitigation and historical revisiting},
  author={Li, Renda and Huang, Hailang and Wei, Fei and Xiong, Feng and Wang, Yong and Chu, Xiangxiang},
  booktitle={Proceedings of the AAAI Conference on Artificial Intelligence},
  volume={40},
  number={27},
  pages={23123--23131},
  year={2026}
}

@article{zhai2025agentevolver,
  title={Agentevolver: Towards efficient self-evolving agent system},
  author={Zhai, Yunpeng and Tao, Shuchang and Chen, Cheng and Zou, Anni and Chen, Ziqian and Fu, Qingxu and Mai, Shinji and Yu, Li and Deng, Jiaji and Cao, Zouying and others},
  journal={arXiv preprint arXiv:2511.10395},
  year={2025}
}

@article{gu2026actor,
  title={Actor-Curator: Co-adaptive Curriculum Learning via Policy-Improvement Bandits for RL Post-Training},
  author={Gu, Zhengyao and Light, Jonathan and Astudillo, Raul and Ye, Ziyu and He, Langzhou and Zou, Henry Peng and Cheng, Wei and Paternain, Santiago and Yu, Philip S and Yue, Yisong},
  journal={arXiv preprint arXiv:2602.20532},
  year={2026}
}

@article{jiang2025vcrl,
  title={Vcrl: Variance-based curriculum reinforcement learning for large language models},
  author={Jiang, Guochao and Feng, Wenfeng and Quan, Guofeng and Hao, Chuzhan and Zhang, Yuewei and Liu, Guohua and Wang, Hao},
  journal={arXiv preprint arXiv:2509.19803},
  year={2025}
}

@article{wang2025let,
  title={Let it flow: Agentic crafting on rock and roll, building the rome model within an open agentic learning ecosystem},
  author={Wang, Weixun and Xu, XiaoXiao and An, Wanhe and Dai, Fangwen and Gao, Wei and He, Yancheng and Huang, Ju and Ji, Qiang and Jin, Hanqi and Li, Xiaoyang and others},
  journal={arXiv preprint arXiv:2512.24873},
  year={2025}
}

@article{nath2025adaptive,
  title={Adaptive guidance accelerates reinforcement learning of reasoning models},
  author={Nath, Vaskar and Lau, Elaine and Gunjal, Anisha and Sharma, Manasi and Baharte, Nikhil and Hendryx, Sean},
  journal={arXiv preprint arXiv:2506.13923},
  year={2025}
}

@article{rajeswaran2017learning,
  title={Learning complex dexterous manipulation with deep reinforcement learning and demonstrations},
  author={Rajeswaran, Aravind and Kumar, Vikash and Gupta, Abhishek and Vezzani, Giulia and Schulman, John and Todorov, Emanuel and Levine, Sergey},
  journal={arXiv preprint arXiv:1709.10087},
  year={2017}
}

@inproceedings{hester2018deep,
  title={Deep q-learning from demonstrations},
  author={Hester, Todd and Vecerik, Matej and Pietquin, Olivier and Lanctot, Marc and Schaul, Tom and Piot, Bilal and Horgan, Dan and Quan, John and Sendonaris, Andrew and Osband, Ian and others},
  booktitle={Proceedings of the AAAI conference on artificial intelligence},
  volume={32},
  number={1},
  year={2018}
}

@article{vecerik2017leveraging,
  title={Leveraging demonstrations for deep reinforcement learning on robotics problems with sparse rewards},
  author={Vecerik, Mel and Hester, Todd and Scholz, Jonathan and Wang, Fumin and Pietquin, Olivier and Piot, Bilal and Heess, Nicolas and Roth{\"o}rl, Thomas and Lampe, Thomas and Riedmiller, Martin},
  journal={arXiv preprint arXiv:1707.08817},
  year={2017}
}

@inproceedings{nair2018overcoming,
  title={Overcoming exploration in reinforcement learning with demonstrations},
  author={Nair, Ashvin and McGrew, Bob and Andrychowicz, Marcin and Zaremba, Wojciech and Abbeel, Pieter},
  booktitle={2018 IEEE international conference on robotics and automation (ICRA)},
  pages={6292--6299},
  year={2018},
  organization={IEEE}
}

@inproceedings{libardi2021guided,
  title={Guided exploration with proximal policy optimization using a single demonstration},
  author={Libardi, Gabriele and De Fabritiis, Gianni and Dittert, Sebastian},
  booktitle={International Conference on Machine Learning},
  pages={6611--6620},
  year={2021},
  organization={PMLR}
}

@article{yan2025learning,
  title={Learning to reason under off-policy guidance},
  author={Yan, Jianhao and Li, Yafu and Hu, Zican and Wang, Zhi and Cui, Ganqu and Qu, Xiaoye and Cheng, Yu and Zhang, Yue},
  journal={arXiv preprint arXiv:2504.14945},
  year={2025}
}

@article{zhang2026onpolicyrlmeetsoffpolicy,
      title={On-Policy RL Meets Off-Policy Experts: Harmonizing Supervised Fine-Tuning and Reinforcement Learning via Dynamic Weighting}, 
      author={Wenhao Zhang and Yuexiang Xie and Yuchang Sun and Yanxi Chen and Guoyin Wang and Yaliang Li and Bolin Ding and Jingren Zhou},
      year={2026},
      journal={arXiv preprint arXiv:2508.11408}, 
}

@article{ma2025learning,
  title={Learning What Reinforcement Learning Can't: Interleaved Online Fine-Tuning for Hardest Questions},
  author={Ma, Lu and Liang, Hao and Qiang, Meiyi and Tang, Lexiang and Ma, Xiaochen and Wong, Zhen Hao and Niu, Junbo and Shen, Chengyu and He, Runming and Li, Yanhao and others},
  journal={arXiv preprint arXiv:2506.07527},
  year={2025}
}

@article{liang2025squeeze,
  title={Squeeze the soaked sponge: Efficient off-policy reinforcement finetuning for large language model},
  author={Liang, Jing and Tang, Hongyao and Ma, Yi and Liu, Jinyi and Zheng, Yan and Hu, Shuyue and Bai, Lei and Hao, Jianye},
  journal={arXiv preprint arXiv:2507.06892},
  year={2025}
}

@article{fu2025srft,
  title={Srft: A single-stage method with supervised and reinforcement fine-tuning for reasoning},
  author={Fu, Yuqian and Chen, Tinghong and Chai, Jiajun and Wang, Xihuai and Tu, Songjun and Yin, Guojun and Lin, Wei and Zhang, Qichao and Zhu, Yuanheng and Zhao, Dongbin},
  journal={arXiv preprint arXiv:2506.19767},
  year={2025}
}

@article{van2018deep,
  title={Deep reinforcement learning and the deadly triad},
  author={Van Hasselt, Hado and Doron, Yotam and Strub, Florian and Hessel, Matteo and Sonnerat, Nicolas and Modayil, Joseph},
  journal={arXiv preprint arXiv:1812.02648},
  year={2018}
}

@article{zheng2025prosperity,
  title={Prosperity before Collapse: How Far Can Off-Policy RL Reach with Stale Data on LLMs?},
  author={Zheng, Haizhong and Zhao, Jiawei and Chen, Beidi},
  journal={arXiv preprint arXiv:2510.01161},
  year={2025}
}

@misc{gao2025turnsunlockinglonghorizonagentic,
      title={Beyond Ten Turns: Unlocking Long-Horizon Agentic Search with Large-Scale Asynchronous RL}, 
      author={Jiaxuan Gao and Wei Fu and Minyang Xie and Shusheng Xu and Chuyi He and Zhiyu Mei and Banghua Zhu and Yi Wu},
      year={2025},
      eprint={2508.07976},
      archivePrefix={arXiv},
      primaryClass={cs.CL},
      url={https://arxiv.org/abs/2508.07976}
}

@article{tongyidr,
  title={Tongyi DeepResearch Technical Report},
  author={Team, Tongyi DeepResearch and Li, Baixuan and Zhang, Bo and Zhang, Dingchu and Huang, Fei and Li, Guangyu and Chen, Guoxin and Yin, Huifeng and Wu, Jialong and Zhou, Jingren and others},
  journal={arXiv preprint arXiv:2510.24701},
  year={2025}
}

@article{zhou2025browsecompzh,
  title={Browsecomp-zh: Benchmarking web browsing ability of large language models in chinese},
  author={Zhou, Peilin and Leon, Bruce and Ying, Xiang and Zhang, Can and Shao, Yifan and Ye, Qichen and Chong, Dading and Jin, Zhiling and Xie, Chenxuan and Cao, Meng and others},
  journal={arXiv preprint arXiv:2504.19314},
  year={2025}
}

@article{dong2026agent,
  title={Agent-World: Scaling Real-World Environment Synthesis for Evolving General Agent Intelligence},
  author={Dong, Guanting and Lu, Junting and Huang, Junjie and Zhong, Wanjun and Liu, Longxiang and Huang, Shijue and Li, Zhenyu and Zhao, Yang and Song, Xiaoshuai and Li, Xiaoxi and others},
  journal={arXiv preprint arXiv:2604.18292},
  year={2026}
}

@article{rein2023gpqa,
  title={Gpqa: A graduate-level google-proof q\&a benchmark},
  author={Rein, David and Hou, Betty Li and Stickland, Asa Cooper and Petty, Jackson and Pang, Richard Yuanzhe and Dirani, Julien and Michael, Julian and Bowman, Samuel R},
  journal={arXiv preprint arXiv:2311.12022},
  year={2023}
}

@inproceedings{lin2022truthfulqa,
  title={Truthfulqa: Measuring how models mimic human falsehoods},
  author={Lin, Stephanie and Hilton, Jacob and Evans, Owain},
  booktitle={Proceedings of the 60th annual meeting of the association for computational linguistics (volume 1: long papers)},
  pages={3214--3252},
  year={2022}
}

@article{zhou2023instruction,
  title={Instruction-following evaluation for large language models},
  author={Zhou, Jeffrey and Lu, Tianjian and Mishra, Swaroop and Brahma, Siddhartha and Basu, Sujoy and Luan, Yi and Zhou, Denny and Hou, Le},
  journal={arXiv preprint arXiv:2311.07911},
  year={2023}
}

@article{singh2025openai,
  title={Openai gpt-5 system card},
  author={Singh, Aaditya and Fry, Adam and Perelman, Adam and Tart, Adam and Ganesh, Adi and El-Kishky, Ahmed and McLaughlin, Aidan and Low, Aiden and Ostrow, AJ and Ananthram, Akhila and others},
  journal={arXiv preprint arXiv:2601.03267},
  year={2025}
}

@article{liu2024deepseek,
  title={Deepseek-v3 technical report},
  author={Liu, Aixin and Feng, Bei and Xue, Bing and Wang, Bingxuan and Wu, Bochao and Lu, Chengda and Zhao, Chenggang and Deng, Chengqi and Zhang, Chenyu and Ruan, Chong and others},
  journal={arXiv preprint arXiv:2412.19437},
  year={2024}
}

@online{minimax_m2_1,
  author = {{MiniMax}},
  title = {MiniMax M2.1 System Card},
  year = {2025},
  url = {https://www.minimax.io/news/minimax-m21},
  urldate = {2026-04-10},
  organization = {MiniMax}
}

@article{turpin2023language,
  title={Language models don't always say what they think: Unfaithful explanations in chain-of-thought prompting},
  author={Turpin, Miles and Michael, Julian and Perez, Ethan and Bowman, Samuel},
  journal={Advances in Neural Information Processing Systems},
  volume={36},
  pages={74952--74965},
  year={2023}
}

@article{hubotter2026reinforcement,
  title={Reinforcement Learning via Self-Distillation},
  author={H{\"u}botter, Jonas and L{\"u}beck, Frederike and Behric, Lejs and Baumann, Anton and Bagatella, Marco and Marta, Daniel and Hakimi, Ido and Shenfeld, Idan and Buening, Thomas Kleine and Guestrin, Carlos and others},
  journal={arXiv preprint arXiv:2601.20802},
  year={2026}
}

@article{zhao2026self,
  title={Self-Distilled Reasoner: On-Policy Self-Distillation for Large Language Models},
  author={Zhao, Siyan and Xie, Zhihui and Liu, Mengchen and Huang, Jing and Pang, Guan and Chen, Feiyu and Grover, Aditya},
  journal={arXiv preprint arXiv:2601.18734},
  year={2026}
}

@article{ma2026skillclaw,
  title={SkillClaw: Let Skills Evolve Collectively with Agentic Evolver},
  author={Ma, Ziyu and Yang, Shidong and Ji, Yuxiang and Wang, Xucong and Wang, Yong and Hu, Yiming and Huang, Tongwen and Chu, Xiangxiang},
  journal={arXiv preprint arXiv:2604.08377},
  year={2026}
}

@inproceedings{
ji2026tree,
title={Tree Search for {LLM} Agent Reinforcement Learning},
author={Yuxiang Ji and Ziyu Ma and Yong Wang and Guanhua Chen and Xiangxiang Chu and Liaoni Wu},
booktitle={The Fourteenth International Conference on Learning Representations},
year={2026},
url={https://openreview.net/forum?id=ZpQwAFhU13}
}

@article{ji2026thinking,
  title={Thinking with Map: Reinforced Parallel Map-Augmented Agent for Geolocalization},
  author={Ji, Yuxiang and Wang, Yong and Ma, Ziyu and Hu, Yiming and Huang, Hailang and Hu, Xuecai and Chen, Guanhua and Wu, Liaoni and Chu, Xiangxiang},
  journal={ACL},
  year={2026}
}

@article{zheng2026code2world,
  title={Code2world: A gui world model via renderable code generation},
  author={Zheng, Yuhao and Zhong, Li'an and Wang, Yi and Dai, Rui and Liu, Kaikui and Chu, Xiangxiang and Lv, Linyuan and Torr, Philip and Lin, Kevin Qinghong},
  journal={arXiv preprint arXiv:2602.09856},
  year={2026}
}
